\title{Degrees of freedom for off-the-grid sparse estimation}
\newcommand{\lasso}{Lasso}
\newcommand{\blasso}{Blasso}
\newcommand{\bell}{\boldsymbol{\ell}}
\author{%
Clarice Poon\footnote{Department of Mathematical Sciences, University of Bath, Bath BA2 7AY, UK, \texttt{cmshp20@bath.ac.uk}}, \quad%
Gabriel Peyr\'e\footnote{CNRS and DMA, Ecole Normale Sup\'erieure, 45 rue d'Ulm, F-75230 PARIS cedex 05, FRANCE, \texttt{gabriel.peyre@ens.fr} }%
}
\date{\today}
\begin{document}

\maketitle


\begin{abstract}
A central question in modern machine learning and imaging sciences is to quantify the number of effective parameters of vastly over-parameterized models. 
The degrees of freedom is a mathematically convenient way to define this number of parameters. Its computation and properties are well understood when dealing with discretized linear models, possibly regularized using sparsity. In this paper, we argue that this way of thinking is plagued when dealing with models having very large parameter spaces. In this case it makes more sense to consider ``off-the-grid'' approaches, using a continuous parameter space. This type of approach is the one favoured when training multi-layer perceptrons, and is also becoming popular to solve super-resolution problems in imaging. 
Training these off-the-grid models with a sparsity inducing prior can be achieved by solving a convex optimization problem over the space of measures, which is often called the Beurling \lasso~(\blasso), and is the continuous counterpart of the celebrated \lasso~parameter selection method. 
In previous works \cite{zou2007degrees,dossal2013degrees}, the degrees of freedom for the Lasso was shown to coincide with the size of the smallest solution support.
Our main contribution is a  proof of a continuous counterpart to this result for the \blasso. While in dimension $d$, each of the $k$ non-zero recovered atom in the recovered measure carries over $d+1$ parameters ($d$ for the position and 1 for the weight), a surprising implication of our new formula it that the degrees of freedom for these off-the-grid models is in general strictly smaller $(d+1)k$. Our findings thus suggest that discretized methods actually vastly over-estimate the number of intrinsic continuous degrees of freedom. 
Our second contribution is a detailed study of the case of sampling Fourier coefficients in 1D, which corresponds to a super-resolution problem. 
We show that our formula for the degrees of freedom is valid outside of a set of measure zero of observations, which in turn justifies its use to compute an unbiased estimator of the prediction risk using the Stein Unbiased Risk Estimator (SURE).
We also report numerical results for both the case of Fourier sampling and the learning of a multilayers perceptron with a single hidden layer. These experimental simulations highlight the strong bias induced by discretization errors, which makes the \lasso~approach inconsistent to approximate the risk of continuous models.  
\end{abstract}

\newcommand{\dof}{\mathrm{dof}}
\newcommand{\dive}{\mathrm{div}}

\newcommand{\mes}{\mathbf{m}}

\section{Introduction}

\subsection{Stein's lemma and degrees of freedom}

Given a Gaussian random variable $y \sim \Nn(\mu, \sigma^2 \Id_n)$ with mean $\mu \in \RR^n$ and standard deviation $\sigma>0$ and a weakly differentiable estimator of $\mu$, $\hat \mu : \RR^n 
\to \RR^n$, the degrees of freedom (dof) of the estimator  is defined to be
\begin{equation}
	\dof(\hat \mu) \eqdef \frac{1}{\sigma^2}\sum_i \mathrm{Cov}(y_i, \hat \mu_i(y)) =  \frac{1}{\sigma^2}\sum_i \EE[(y_i - \mu_i)\hat \mu_i(y)].
\end{equation}
A-priori, computation of this requires knowledge of the unknown $\mu$. However, a remarkable result of Stein \cite{stein1981estimation} shows that
$$
	\dof(\hat \mu) =\EE_y[ \dive(\hat \mu)(y)], 
$$
where $\dive(\hat \mu)(y) \eqdef \sum_i \frac{\partial \hat \mu_i}{\partial y_i}(y)$. Therefore, $\dive(\hat \mu)$ is an unbiased estimate of the degrees of freedom and requires only the divergence of $\hat \mu$ evaluated on the data. 
In the particular case where $\hat\mu$ is an orthogonal projector on some affine space,  $\dof(\hat \mu)$ is equal to the dimension of the space. 

Estimation of the degrees of freedom using $\dive(\hat \mu)$ plays a pivotal role in the definition of parameter selection procedures (typically to select an optimal regularization parameter, denoted $\la$ below) using various penalized empirical risk procedures, such as    Mallows' $C_p$~\cite{mallows1973some}, Akaike information criterion (AIC)~\cite{akaike1998information}, Bayesian information criterion (BIC)~\cite{schwarz1978estimating}, Generalized Cross-Validation (GCV)~\cite{golub1979generalized}.
In the specific case of Gaussian random vectors, one can even have access to an unbiased estimator of the risk using the Stein Unbiased Risk Estimator (SURE)~\cite{stein1981estimation}, since one has
\eq{
	\EE( \norm{\hat \mu(y)-\mu}^2 ) = \EE( \sure(\hat \mu)(y) )
	\qwhereq
	\sure(\hat \mu)(y) \eqdef -n \sigma^2 + \norm{y - \hat \mu}^2 +  2\sigma^2 \dive(\hat \mu)(y).
}
Note that the quantity $\sure(\hat \mu)(y)$ does not involve $\mu$, and can thus in practice be estimated from a realization of the observation $y$ alone. We refer to Section~\ref{sec-sure} for more details about the SURE. 
The use of degrees of freedom beyond Gaussian vectors, and in particular for exponential families, is studied for instance in~\cite{hudson1978nie,Hwang82,eldar-gsure}. It  is thus possible to use our results in these extended settings. 

Since the degrees of freedom plays an important role in risk estimation and parameter selection, it is pertinent to understand how to compute the divergence of estimators. 
For simple shrinkage operators, $\hat\mu$ and thus $\dive(\hat \mu)$ can be computed in closed form, and the corresponding SURE method is thus extensively used for denoising~\cite{donoho1995adapting}.
The last decades have seen the introduction of various non-linear estimators, and in particular methods based on penalized optimization procedures using sparsity-enforcing priors (such as the Lasso), which we detail next.  
For such estimators, typically computed approximately using an iterative scheme, the computation of $\dive(\hat \mu)$ can be implemented efficiently and stably using Monte-Carlo sampling~\cite{ramani2008monte} and recursive automatic differentiation~\cite{giryes2011projected,ramani2012regularization,deledalle2014stein}. In some cases (highlighted below), one can also give a mathematical expression of $\dive(\hat \mu)$ involving the solution of the optimization problem. 
The purpose of this paper is to achieve a similar theoretical understanding of the degrees of freedom for the so-called Beurling-\lasso~estimator, which is an infinite-dimensional version of the Lasso. One challenging aspect of this estimator is that it requires the resolution of an optimization problem over an infinite dimensional space (of Radon measures), and unlike previous works, the set of all possible recovered support/parameters cannot be countably enumerated, thus making  existing proof strategies ineffective (which are inherently finite dimensional).

\subsection{DOF of the \lasso}

Given $y\in \RR^n$ and a design matrix $X\in \RR^{n\times p}$, the \lasso~estimate is
\begin{equation}\label{eq-lasso}
	\hat \beta \in \uargmin{\beta\in \RR^p} \frac{1}{2}\norm{y - X \beta}^2_2 + \lambda\norm{\beta}_1.
\end{equation}
We assume that $y\in \Nn(\mu, \sigma^2 \Id_n)$ for some $\mu\in\RR^n$, $\sigma>0$ and we consider the estimator $\hat \mu(y) = X \hat \beta$. Note that by strong convexity of the $\ell_2$ term, $\hat \mu(y)$ is uniquely defined, and, even if $\hat \beta$ is not unique, the value  of $\hat \mu(y)$ is independent of the specific choice of a solution $\hat\beta$. 

The degrees of freedom for the \lasso~estimator has been studied in~\cite{zou2007degrees} for injective design matrices $X$ and~\cite{tibshirani2012degrees,dossal2013degrees} for arbitrary matrices. These works show that $\hat\mu$ is Lipschitz and hence differentiable almost everywhere (outside an explicit set of affine hyperspaces), and moreover, for almost every $y\in \RR^n$, the degrees of freedom can be expressed in terms of the smallest active set, that is  $\dof(\hat\mu)(y) = \EE[\abs{A}]$, where $A$ is the smallest set for which $A = \mathrm{Supp}(\hat \beta)$ and $\hat \beta$ is a \lasso~solution.
In the case where the solution $\hat\beta$ is unique (for instance when $X$ is injective), then this simplifies to $\dive(\hat \mu)(y) = \|\hat\beta\|_0 = \abs{\enscond{i}{\hat\beta_i \neq 0}}$.
 
These results have been extended to more general classes of estimators based on various notions of low-complexity (e.g. sparsity) priors, such as for analysis-type \lasso~\cite{tibshirani2012degrees,vaiter-local-behavior}, projection on polytopes~\cite{MeyerWoodroofe} and more general convex sets~\cite{kato2009degrees}, singular value thresholding~\cite{candes2013unbiased} and penalized regression using quite general partly smooth regularizers (such as the group \lasso~and its variants)~\cite{vaiter2017degrees}.

Note that computation of the DOF for variational estimators such as the \lasso~and its variant is closely related to the theory of sensitivity analysis of optimization problems~\cite{BonnansShapiro2000}. Note however that the setting of the \blasso~problem we consider next is more intricate, since it corresponds to the study of an infinite dimensional optimization problem over a non-reflexive Banach space (the space of Radon measures), where usual tools from differential calculus are not available.

\subsection{The curse of discretization}
\label{sec-curse}

In many recent methods developed in machine learning and imaging sciences, $X$ is a finite dimensional approximation of some continuous operator, and one could attempt to obtain increased accuracy by using an increasingly finer grid (letting $p \to \infty$).
This setting usually corresponds to ``over-parameterized models'' where the $p$ columns of $X = \Phi_\Xx$ are samples $\phi(x_i) \in \RR^n$ for some locations $\Xx \eqdef (x_i)_{i=1}^p$ in a parameter space $x_i \in \Om$. Here $\phi : \Om \rightarrow \RR^n$ is a continuous function specifying the parameterization of the linear model. 
Two typical examples of such a settings are:
\begin{itemize}
	\item \textbf{Super-resolution in imaging:} in this case, $\Xx=(x_i)_{i=1}^p$ is a grid on which one aims at recovering Dirac masses composing the signal or the image $\be_0$ to recover from the noisy measurements $y = \Phi_\Xx \be_0 + w$ ($w \in \RR^n$ being a random vector accounting for measurement noise).
		A first popular example on $\Om=\RR^d$ ($d=1$ for signals and $d=2$ for images) are (sampled) convolutions, where $\phi(x) = ( \psi(x-z_j) )_{j=1}^n$ (where $z_j \in \RR^d$ are measurement locations, for instance pixels for an image, and $\psi$ is the point-spread function).
		A second, closely related, example is the one of Fourier measurements on a periodic domain $\Xx=(\RR/\ZZ)^d$, where $\phi(x) = ( e^{  2\imath\pi \dotp{x}{k_j}} )_{j}$ , where $k_j \in \ZZ^d$ are the measured frequencies.
		Since for simplicity, we only consider real-valued measurements in this article, assuming symmetric frequencies $k_{-j}=k_j$ and $k_0=0$, this example can be equivalently written using
		$\phi(x) = \pa{1, \pa{\sqrt{2}\sin(2\pi \dotp{x}{k_j})}_{j=1}^{f}, \pa{\sqrt{2}\cos(2\pi \dotp{x}{k_j})}_{j=1}^{f} }$, which defines a set of $n=2f+1$ real measurements.
		
	\item \textbf{Multi-layer perceptron with a single hidden layer:} given $n$ pairs of features/values $(a_j,y_j) \in \RR^{d} \times \RR$, the goal is to train a network $f_{\be,\Xx}(a) = \sum_{i=1}^p \be_i \xi(\dotp{a}{x_i})$ so that $f_{\be,\Xx}(a_j) \approx y_j$.
		Here $\Xx=(x_i)_{i=1}^p \subset \RR^d$ are the $p$ neurons composing the first (hidden) layer, while $(\be_i)_{i=1}^p$ are the scalar weights compositing the second layer. The function $\xi : \RR \rightarrow \RR$ is a pointwise non-linearity, the most popular one being the ReLu $\xi(r)=\max(r,0)$. These are the parameters to be trained from the data, and this corresponds to using $\phi(x) = ( \xi(\dotp{a_i}{x}) )_{i=1}^n$.
		When the neurons $(x_i)_i$ are fixed, sparsity-regularized empirical risk minimization over the second layer weights $\be$ then corresponds to solving a \lasso~problem~\eqref{eq-lasso}.
		Training the first layer neurons $(x_i)_i$ is non-convex, and this is why it makes sense, as we explain next and following~\cite{bach2017breaking}, to rather consider a convex problem on the space of measures.
\end{itemize}

As the discretization $p$ of the model increases, the design matrix $X=\Phi_\Xx$ becomes increasingly coherent (the columns $\phi(x_i)$ being highly correlated), so that both the theoretical analysis and the discrete numerical optimization solvers for the \lasso~\eqref{eq-lasso} become inefficient. 
An typical example of these difficulties is that, even for well specified deterministic observations  $y = X \beta_0$ (generated with a sparse vector $\|\be_0\|_0=k$), the solutions $\hat\be$ of the \lasso~\eqref{eq-lasso} are in practice composed of much more than $k$ Diracs when $p$ is large (and the presence of noise further aggravates this problem). This is made precise in~\cite{duval2017sparse}, where it is shown that for a 1-D deconvolution problem, $\|\hat\be\|_0=2k$, so that the degrees of freedom is the double of the number of spikes. This however should come as no surprise, since the ``intuitive'' number of degrees of freedom should take into account both positions and amplitudes, and is thus expected to be much larger than $k$. 
These important observations thus raise the question of properly and stably defining a consistent notion of degrees of freedom for these over-parameterized models. It is the purpose of this article to do so, and we show that it can be achieved in a painless way by considering a continuous generalization of the \lasso.

\subsection{Off-the-grid approaches and \blasso}
 
In order to alleviate these issues, a recent trend is to rather consider an ``off-the-grid'' approach, where one does not discretize the operator, but instead optimize over a sparse set of positions $(x_i)_i$ and amplitudes $(\be_i)_i$. In order to maintain the convexity of the \lasso~problem (which is non-convex with respect to the position), one rather optimizes over the space of Radon measures. One thus aims at recovering a sparse discrete measure of the form $\hat\mes = \sum_{j=1}^k \beta_j \delta_{x_j}$, by solving the following optimisation problem
\begin{equation}\label{eq:blasso}
	\min_{\mes \in \Mm(\Om)} \frac{1}{2}\norm{\Phi \mes - y}^2_2 + \la \abs{\mes}_{TV}. \tag{$\Pp_\la(y)$}
\end{equation}
Here,  $\Mm(\Om)$ denotes the whole space of Radon measures (in particular not only sparse ones) on a parameter domain $\Om$ (assumed for simplicity to be a subset of $\RR^d$).  The total variation norm of a  measure $\mes\in\Mm(\Om)$ is defined by 
$$
	\abs{\mes}_{TV} \eqdef \sup\enscond{\int_\Om f(x)\mathrm{d}\mes(x)}{\norm{f}_\infty\leq 1, \; f\in C_0^\infty(\Om)},
$$
and is equal to the total mass of the absolute value $\abs{\mes}_{TV} = |\mes|(\Om)$.
It generalizes the discrete $\ell^1$ norm in the sense that $\abs{\sum_{j=1}^k \beta_j \delta_{x_j}}_{TV} = \norm{\be}_1$.
The linear operator $\Phi:\Mm(\Om) \to \RR^n$ is defined by 
$\Phi \mes = \int_\Om \phi(x) \mathrm{d} \mes(x)$ for some continuous function $\phi \in \Cder{}(\Om; \RR^n)$. 
This formulation is the so-called Beurling \lasso~(\blasso)~\cite{deCastro-exact2012}, also proposed in \cite{bredies-inverse2013}. The properties of this estimator have been extensively studied in  \cite{candes-superresolution2013,candes-towards2013,azais-spike2014,duval2015exact}.
This is an extension of the \lasso, since the \lasso~can be obtained by restricting the optimisation over the measures supported on a discrete and finite grid $\Xx = (x_j)_{j=1}^p$ and letting $X$ be the matrix associated with the finite dimensional operator:  
\begin{equation}\label{eq:Phix}
	\Phi_\Xx: \beta\in  \RR^p \mapsto \sum_j  \beta_j \phi(x_j) 
	= \int \phi(x) \mathrm{d}\pa{\sum_{j=1}^p \beta_j \delta_{x_j}}
\end{equation}

In contrast to the \lasso~(as mentioned  above), it is known that under certain conditions, the \blasso~allows for the recovery of exactly $k$ components. 
If $y=\Phi( \sum_{j=1}^k \beta_j \delta_{x_j} ) + w$ and the additive noise $w$ is small enough, under some non-degeneracy condition, it is indeed shown in~\cite{duval2015exact} that the solution $\hat\mes$ of~\eqref{eq:blasso} is unique and is a discrete measure composed of $k$ diracs. 
This important stability property makes the \blasso~a perfect fit to define a meaningful notion of degrees of freedom for over-parameterized models.

The goal of this paper is thus to study the degrees of freedom associated to the estimator  $\hat \mu(y) \eqdef \Phi \hat  \mes$ where $\hat \mes$ is a solution  to \eqref{eq:blasso}. Again,  
$\hat \mu(y)$ is unique due to strong convexity of $\norm{\cdot}_2^2$, even if $\hat \mes$ is not unique. 
One surprising outcome of our analysis is that although the number of recovered parameters is still $2k$ (when $d=1$), since there are $k$ unknown positions $x$ and $k$ unknown amplitudes $\beta$, the degrees of freedom can be shown to be strictly smaller than $2k$.

%

%

\subsection{Contributions}

Our first theoretical result is Theorem~\ref{prop:div_formula}, which states a formula for $\dive(\hat \mu)$ under the assumption that one has access to a family of solutions of the \blasso~which is a smooth function of the observations $y$.
Our second and main contribution is Theorem~\ref{thm:cont_ext_supp1}, which shows that this assumption is valid (and thus the formula can be used) outside a set $y \notin \Hh$ of degenerate observations. 
Our last result is Theorem~\ref{thm:main2} which presents a formula for the degrees of freedom of the \blasso~in the case of 1-D Fourier measurements in terms of the number of recovered parameters and the curvature of the dual solution. 

En route to proving this result, we derive some results on the smallest active support which are of independent interest:
\begin{itemize}
\item With a nondegeneracy condition (Assumption (A1) in Section \ref{sec:divergence}) in the general setting, we prove that almost everywhere, the smallest active support evolves along a smooth path.
\item Without the nondegeneracy condition in the case Fourier sampling in dimension $d=1$, we prove that  almost everywhere, the smallest active support evolves along a smooth path.
\item We present numerical examples to verify our theoretical results. For the cases of sampling Fourier coefficients and the training of a 2-layers neural network, we demonstrate that our proposed divergence formula provides a tight estimate of  the risk, and that the dof is in general much smaller than the number of recovered parameters.
\end{itemize}

\subsection{Outline}

In Section \ref{sec:formal}, we assume that the recovered amplitudes and positions move along a differentiable  path and compute the divergence. The rest of this paper is dedicated to establishing cases where this smoothness assumption is valid. In Section \ref{sec:divergence}, we show that under a nondegeneracy condition at $y$, the solution path is indeed locally smooth. In Section \ref{sec:fourier}, we restrict to the case of sampling Fourier coefficients in dimension $d=1$, and show that the solution path is smooth around almost every $y \in \RR^n$ and hence, the divergence formula presented in Section \ref{sec:formal} is indeed valid and this allows for a closed form expression for the degrees of freedom.


\newcommand{\sgn}{s}

\section{The \blasso}

In this section, we recall some properties of the \blasso~and introduce some notations which are used throughout this article. We refer to \cite{bredies-inverse2013,deCastro-exact2012,duval2015exact} for more details on theoretical properties of the \blasso.

\subsection{Dual problem}

Let us first show that $y\mapsto \hat \mu(y)$ is a Lipschitz function and is thus differentiable almost everywhere. This is a direct consequence of the dual formulation  to  \eqref{eq:blasso}:
\begin{equation}\label{eq:dual}
\min_{\norm{\Phi^* p}_\infty \leq 1} \norm{p - y/\la}_2 \tag{$\Dd_\la(y)$}
\end{equation}
is the projection of $y/\lambda$ onto a closed convex set. Note that \eqref{eq:dual} has a unique solution $p_y$, and moreover, the dual and primal solutions are related by 
\begin{equation}\label{eq:primal-dual}
p_y\eqdef \frac{y -\Phi \mes_y }{\la} \in \argmin \Dd_\la(y)\qandq \mes_y \in \argmin \Pp_\la(y)
\end{equation}
In particular, we can write for any primal solution $\mes_y$, 
$$\Phi \mes_y = (\Id - P_C)(y/\lambda)$$ 
where $P_C$ is the projection onto the convex set $\enscond{p}{\norm{\Phi^* p}_\infty \leq 1}$.  So,
$$
\norm{\Phi (\mes_y - \mes_{y'}) } = \frac{1}{\lambda}\norm{(\Id - P_C)(y-y')} \leq  \frac{1}{\la}\norm{y-y'}.
$$ 
and hence, $y\mapsto \hat \mu(y)$ is a Lipschitz function and is differentiable almost everywhere. 
However, to derive an explicit formula for the degrees of freedom, we need to prove that  the recovered amplitudes $\be=(\be_j)_j$ and positions $\Xx=(x_j)_j$  are Lipschitz (as functions of $y$). 
Note that given $n$ measurements, there always exists a primal solution which is a discrete measure made of at most $n$ Diracs~\cite{fisher1975spline,unser2017splines,boyer2019representer}.

\subsection{Dual certificates and extended support}

Given $y$, its dual certificate is 
\eql{\label{eq-defn-dual-certif}
	\eta_y \eqdef \Phi^* p_y = \Phi^* \pa{\frac{y - \Phi \mes_y}{\la}}, 
}  
where $p_y$ is the dual solution described in \eqref{eq:primal-dual} and $\mes_y$ is any primal solution. Since $p_y$ is unique, $\eta_y$ is unique even if $\mes_y$ is not. Moreover, 
$$
	\eta_y \in \partial \abs{\mes_y}_{TV}
$$
where $ \partial \abs{\mes}_{TV}$ denotes the subdifferential of $\abs{\cdot}_{TV}$ at $\mes$. It can be shown (see for instance~\cite{duval2015exact}) that 
$$
\partial \abs{\mes}_{TV}  = \enscond{f\in \Cc(\Omega)}{\int f(x) \mathrm{d}\mes(x) = \abs{\mes}_{TV}\qandq \norm{f}_\infty \leq 1}.
$$
The extended support at $y$ is defined to be
$$
	\Ee_y \eqdef \enscond{x}{\abs{\eta_y(x)}=1 }.
$$
Given any solution $\mes_y$ of \eqref{eq:blasso}, we have that $\mathrm{Supp}(\mes_y) \subseteq \Ee_y$ and $\int \eta_y(x) \mathrm{d}\mes_y(x) = \abs{\mes_y}_{TV}$. In particular, if $\mes_y = \sum \beta_j \delta_{x_j}$, then $\eta_y(x_j) = \sign(\beta_j)$.

\subsection{Notations}

Given $f:\RR^n\to \RR$, its gradient is written as $\nabla f(y) = (\partial_i f(y))_{i=1}^n\in \RR^n$; and given a differentiable vector-valued function $f: \RR^n\to \RR^m$, $f(y) = (f_i(y))_{i=1}^m$, its Jacobian is  the $m\times n$ matrix whose rows are $\nabla f_j$: $$
J_f(y) = \begin{pmatrix}
\nabla f_1(y) & \nabla f_2(y) &
\cdots &
\nabla f_m(y)
\end{pmatrix}^\top \in \RR^{m\times n}
$$

For $n\in\NN$, let $[n]\eqdef \ens{1,2,\ldots, n }$. We write $M\succeq 0$ to denote that a matrix $M$ is positive semi-definite and $M\succ 0$ to denote a matrix as positive definite. Given two positive semi-definite matrices $M$ and $N$, write $M\succeq N$ (resp. $M\succ N$) to mean $M-N\succeq 0$  (resp. $M-N\succ 0$).
Given $M\in\RR^{n\times m}$, $I\subseteq [n]$, $J\subseteq[m]$, let $M_J$ denote the matrix with columns restricted to the index set $J$ and $M_{(I,\cdot)}$ denote the matrix with rows restricted to the index set $I$.

Recall for $\Xx=(x_i)_i \in \Om^k$ the definition of $\Phi_\Xx$ in \eqref{eq:Phix}. We also 
define the derivative mapping $\Phi_\Xx^{(1)}: \RR^{kd}\to \RR^n$, so that given $(\beta_j)_{j=1}^k$ where $\beta_j\in \RR^d$,
\begin{equation}
	\Phi_\Xx^{(1)} \beta \eqdef \pa{\sum_{j=1}^k \dotp{\beta_j}{\nabla \phi(x_j)}}_{k=1}^n.
\end{equation}
We define for $\Xx = (x_i)_i \in \Om^k$,
$\Gamma_\Xx \eqdef [\Phi_{\Xx}, \Phi_{\Xx}^{(1)}] \in \RR^{n\times k(d+1)}$. Given $\beta \in \RR^k$ and $\Xx \in \Om^k$, we denote $\mes_{\beta,\Xx}\eqdef \sum_{j=1}^k \beta_j \delta_{x_j}$.

\section{Formal computation of the degrees of freedom}\label{sec:formal}

We first derive an expression for the divergence 
under the assumption that there exist solutions such that the number of recovered spikes $k$ is constant locally, and the recovered positions and amplitudes are differentiable.

\begin{thm}\label{prop:div_formula}
Let $\bar y\in \RR^n$ and suppose that there is a neighbourhood $U$  of $\bar y$ such that for all $y'\in U$, there exists $k\in \NN$, $\beta_{y'} \in \RR^k$ and $\Xx_{y'} \in \Om^k$ such that $\mes_{\beta_{y'}, \Xx_{y'}}$ solves $(\Pp(y'))$. We also assume that the mapping $y \in U \mapsto (\beta_y, \Xx_y) \in \RR^k \times \Om^k$ is differentiable. 
Writing  $\bar \Xx = (\bar x_j)_{j=1}^k= \Xx_{\bar y}$ and $\bar \beta = \beta_{\bar y}$,   $Q = (Q_j)_{j=1}^k$ with $Q_j = -\frac{\lambda}{\bar \beta_j}\nabla^2 \eta_{\bar y}(\bar x_j)\in \RR^{d\times d}$ (where $\eta_{\bar y}$ is defined in~\eqref{eq-defn-dual-certif}), assume that
$
M \eqdef \Gamma_{\bar \Xx}^* \Gamma_{\bar \Xx}   + 
\begin{pmatrix}
	0_{k\times k} & 0_{k\times kd} \\
	0_{k d\times k} & \diag(Q)
\end{pmatrix}$ 
is invertible, where $\diag(Q)$ is the block-diagonal matrix with $(Q_j)_j$ along the diagonal.
Then,
\eql{\label{eq-formula}
	\dive(\hat \mu)(\bar y)  = \tr\pa{\Gamma_{\bar \Xx} M^{-1} \Gamma_{\bar \Xx}^*}.
}
\end{thm}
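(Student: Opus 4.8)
The plan is to differentiate the identity $\hat\mu(y) = \Phi_{\Xx_y}\beta_y$ by splitting it through the parameter map, and to recover the Jacobian of the parameters by implicitly differentiating the first-order conditions that the dual certificate satisfies along the assumed smooth path. Write $\theta \eqdef (\beta,\Xx)\in\RR^{k(d+1)}$ and $F(\theta)\eqdef \Phi_\Xx\beta=\sum_j\beta_j\phi(x_j)$, so that $\hat\mu(y)=F(\theta_y)$ with $\theta_y=(\beta_y,\Xx_y)$, and by the chain rule $J_{\hat\mu}(\bar y)=J_F(\bar\theta)\,J_\theta(\bar y)$ with $\dive(\hat\mu)(\bar y)=\tr\pa{J_{\hat\mu}(\bar y)}$. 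Differentiating $F$ in $\beta_j$ gives the column $\phi(\bar x_j)$ and in the $l$-th coordinate of $x_j$ gives $\bar\beta_j\partial_l\phi(\bar x_j)$; since the columns of $\Gamma_{\bar\Xx}=[\Phi_{\bar\Xx},\Phi_{\bar\Xx}^{(1)}]$ are exactly $\phi(\bar x_j)$ and $\partial_l\phi(\bar x_j)$, this reads
\[
J_F(\bar\theta)=\Gamma_{\bar\Xx}\,D, \qquad D\eqdef\begin{pmatrix}\Id_k & 0\\ 0 & \diag(\bar\beta_j\Id_d)_j\end{pmatrix}.
\]

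Next I set up the certificate equations. Along the path $\mes_{\beta_y,\Xx_y}$ is a solution, so $\eta_y=\tfrac1\la\Phi^*(y-\Phi\mes_{\beta_y,\Xx_y})$ satisfies $\norm{\eta_y}_\infty\le 1$ and equals $\sign(\beta_j)$ at each (interior) support point $x_j$; hence each $x_j$ is an interior extremum, giving $\eta_y(x_j)=s_j\eqdef\sign(\beta_j)$ and $\nabla\eta_y(x_j)=0$. Using $(\Phi^*p)(x)=\dotp{\phi(x)}{p}$, these $k(d+1)$ scalar conditions are packaged as $G(\theta_y,y)\equiv 0$ with, writing $r=y-\Phi_\Xx\beta$,
\[
G(\theta,y)\eqdef \Gamma_\Xx^*(y-\Phi_\Xx\beta)-\la\begin{pmatrix}s\\ 0\end{pmatrix},
\]
the two blocks of $\Gamma_\Xx^*r$ encoding the values $\dotp{\phi(x_j)}{r}=\la\,\eta_y(x_j)$ and gradients $\dotp{\partial_l\phi(x_j)}{r}=\la\,(\nabla\eta_y(x_j))_l$.

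Now I differentiate $G(\theta_y,y)\equiv0$ to get $\partial_\theta G\,J_\theta+\partial_y G=0$, where $\partial_y G=\Gamma_{\bar\Xx}^*$. The core is $\partial_\theta G$ at $\bar y$: expanding the derivatives of $\Gamma_\Xx^*(y-\Phi_\Xx\beta)$ in $\beta$ and in $\Xx$ yields the Gram terms that assemble into $\Gamma_{\bar\Xx}^*\Gamma_{\bar\Xx}$, the first-order terms proportional to $\dotp{\partial_l\phi(\bar x_j)}{r}$ that vanish because $\nabla\eta_{\bar y}(\bar x_j)=0$, and the diagonal second-order terms $\dotp{\partial^2_{ll'}\phi(\bar x_j)}{r}=\la(\nabla^2\eta_{\bar y}(\bar x_j))_{ll'}=-\bar\beta_j(Q_j)_{ll'}$, which assemble into the $\diag(Q)$ block (the factor $\bar\beta_j$ being exactly what lets $D$ factor out on the right). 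Collecting everything gives $\partial_\theta G=-M\,D$, invertible since $M$ is invertible and $\bar\beta_j\neq0$, so $J_\theta(\bar y)=D^{-1}M^{-1}\Gamma_{\bar\Xx}^*$. Combining,
\[
J_{\hat\mu}(\bar y)=\Gamma_{\bar\Xx}\,D\,\cdot\,D^{-1}M^{-1}\Gamma_{\bar\Xx}^*=\Gamma_{\bar\Xx}\,M^{-1}\,\Gamma_{\bar\Xx}^*,
\]
and taking the trace yields \eqref{eq-formula}.

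The main obstacle is the computation of $\partial_\theta G$ in the position variables: one must carry out the differentiation carefully to see precisely which first-order contributions are annihilated by the stationarity $\nabla\eta_{\bar y}(\bar x_j)=0$, and correctly identify the surviving Hessian term $\la\nabla^2\eta_{\bar y}(\bar x_j)=-\bar\beta_j Q_j$. The structural reason the final formula is so clean is that the same scaling matrix $D$ appears in $J_F$ (from differentiating $F$ in the positions) and in $J_\theta$ (from the implicit derivative), so it cancels, leaving $M^{-1}$ sandwiched between $\Gamma_{\bar\Xx}$ and $\Gamma_{\bar\Xx}^*$.
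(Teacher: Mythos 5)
Your proposal is correct and follows essentially the same route as the paper: both proofs differentiate the first-order optimality system $\Gamma_{\Xx}^*(y-\Phi_\Xx\beta)=\lambda\binom{s}{0}$ (interpolation $\eta_y(x_j)=s_j$ plus stationarity $\nabla\eta_y(x_j)=0$) along the assumed smooth path, use $\nabla\eta_{\bar y}(\bar x_j)=0$ to kill the first-order terms, identify the surviving Hessian terms with $\diag(Q)$, and solve $M\cdot(\text{scaled parameter Jacobian})=\Gamma_{\bar\Xx}^*$ before applying the chain rule to $\hat\mu=\Phi_\Xx\beta$. The only difference is organizational: you factor the amplitude scaling into a matrix $D$ that cancels between $J_F$ and $J_\theta$, whereas the paper works directly with the variables $(J_\beta, D_\beta J_\Xx)$ — the underlying computation is identical.
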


Before proving this first theorem, let us mention an important consequence, that the empirical degrees of freedom is smaller, and in general strictly smaller, than the natural number of parameters $P \eqdef (d+1)k$ of a $k$-sparse model in dimension $d$.
Indeed, in practice (see Section \ref{sec:numerics}), we find that $\dive(\hat \mu)(\bar y)$ can be much smaller than $P$. This can intuitively been seen from formula~\eqref{eq-formula}, since the value of  $\dive(\hat \mu)(\bar y)$ is mostly driven  by the rank of $\Gamma_{\bar \Xx}$ and the curvature of the dual certificate $\eta_{\bar y}$ at the recovered support points. 

\begin{cor}
One has 
\eql{ \label{eq-bound-dof}
	0\leq \dive(\hat \mu)(\bar y) \leq  \rank(\Gamma_{\bar \Xx}) \leq P \eqdef (d+1)k.
}
If furthermore $\Gamma_{\bar \Xx}$ is injective and $\nabla^2 \eta_{\bar y}(\bar x_j)$ is invertible for all $j\in[k]$, then 
\eq{
	0\leq \dive(\hat \mu)(\bar y) <  \rank(\Gamma_{\bar \Xx}) = P.
}
\end{cor}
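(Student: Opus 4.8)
The plan is to read both bounds off the trace formula $\dive(\hat\mu)(\bar y) = \tr\pa{\Gamma_{\bar \Xx} M^{-1}\Gamma_{\bar \Xx}^*}$ of Theorem~\ref{prop:div_formula} by recasting them as statements about the spectrum of a single positive semi-definite matrix. Write $\Gamma \eqdef \Gamma_{\bar \Xx} \in \RR^{n\times P}$ and $D \eqdef \begin{pmatrix} 0_{k\times k} & 0 \\ 0 & \diag(Q)\end{pmatrix}$, so that $M = \Gamma^*\Gamma + D$. The first point I would establish is the curvature sign $Q_j \succeq 0$. Since $\eta_{\bar y}\in\partial\abs{\mes_{\bar y}}_{TV}$ we have $\norm{\eta_{\bar y}}_\infty\le1$ and $\eta_{\bar y}(\bar x_j)=\sign(\bar\beta_j)=\pm1$, so each $\bar x_j$ is an interior maximizer of the $C^2$ function $\sign(\bar\beta_j)\,\eta_{\bar y}$. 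The second-order optimality condition then gives $\sign(\bar\beta_j)\,\nabla^2\eta_{\bar y}(\bar x_j)\preceq0$, which is exactly $Q_j=-\frac{\lambda}{\bar\beta_j}\nabla^2\eta_{\bar y}(\bar x_j)\succeq0$. Consequently $D\succeq0$, and since $\Gamma^*\Gamma\succeq0$ and $M$ is invertible by hypothesis, $M\succ0$ and $M^{-1/2}$ is well defined.

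Next I would reduce everything to the matrix $N \eqdef M^{-1/2}\Gamma^*\Gamma\, M^{-1/2}$. By cyclicity of the trace, $\dive(\hat\mu)(\bar y) = \tr(M^{-1}\Gamma^*\Gamma) = \tr N$, and $N\succeq0$ gives the lower bound $\dive\ge0$ at once (equivalently $\Gamma M^{-1}\Gamma^* = (\Gamma M^{-1/2})(\Gamma M^{-1/2})^*\succeq0$). For the upper bound, substituting $\Gamma^*\Gamma = M - D$ yields $N = \Id_P - R$ with $R\eqdef M^{-1/2}DM^{-1/2}\succeq0$, so every eigenvalue of $N$ lies in $[0,1]$: at most $1$ because $R\succeq0$, and at least $0$ because $N\succeq0$. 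Moreover $N$ has exactly $\rank(\Gamma)$ nonzero eigenvalues, since multiplication by the invertible $M^{-1/2}$ preserves rank and $\rank(\Gamma^*\Gamma)=\rank(\Gamma)$. Summing the eigenvalues gives $\dive = \tr N \le \rank(\Gamma)\le P$, which is exactly \eqref{eq-bound-dof}.

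For the strict statement I would use $\tr N = P - \tr R$. Injectivity of $\Gamma$ forces $\rank(\Gamma) = P$. Invertibility of each $\nabla^2\eta_{\bar y}(\bar x_j)$ upgrades $Q_j\succeq0$ to $Q_j\succ0$, hence $D\ne0$ and therefore $R = M^{-1/2}DM^{-1/2}\ne0$ (as $M^{-1/2}$ is invertible); being a nonzero positive semi-definite matrix, $R$ has $\tr R>0$, whence $\dive = P - \tr R < P = \rank(\Gamma)$.

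I expect the only genuinely substantive step to be the sign of the curvature, $Q_j\succeq0$, which encodes the geometric fact that the recovered spikes sit at the extrema of the dual certificate; once this is in place the bounds reduce to a short exercise in the positive semi-definite ordering together with the rank identity $\rank(\Gamma^*\Gamma)=\rank(\Gamma)$. The one point to handle with care is that the second-order extremum argument requires $\bar x_j$ to lie in the interior of $\Om$ and $\eta_{\bar y}$ to be twice differentiable there, which follows from the smoothness assumed on $\phi$.
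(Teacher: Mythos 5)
Your proof is correct, and it takes a genuinely different (and cleaner) route than the paper's. Both arguments start from the trace formula of Theorem~\ref{prop:div_formula} and the substitution $\Gamma_{\bar\Xx}^*\Gamma_{\bar\Xx} = M - D$, but you conjugate \emph{symmetrically}, working with $N \eqdef M^{-1/2}\Gamma_{\bar\Xx}^*\Gamma_{\bar\Xx}M^{-1/2} = \Id_P - R$, $R \eqdef M^{-1/2}DM^{-1/2}\succeq 0$, so that both bounds become transparent spectral statements: the eigenvalues of $N$ lie in $[0,1]$, $N$ has exactly $\rank(\Gamma_{\bar\Xx})$ nonzero eigenvalues, hence $0\le \tr N \le \rank(\Gamma_{\bar\Xx})$, and the strict case reduces to $\tr R>0$ for a nonzero positive semi-definite matrix. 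The paper instead proves the asymmetric identity $\dive(\hat\mu)(\bar y) = \rank(\Gamma_{\bar\Xx}) - \tr\pa{\Pi_{\Im(\Gamma_{\bar\Xx})} D M^{-1}}$, establishes nonnegativity of the correction term in Appendix~\ref{sec:comment-trace} via pseudo-inverse manipulations, and handles the strict case through a block eigendecomposition of $M^{-1}$. Your version buys two things. First, it avoids pseudo-inverses entirely; in fact the appendix's intermediate claim that $M\succeq \tilde D$ implies $M^{-1}\preceq \tilde D^\dagger$ is false in general for singular $\tilde D$ (take $M = \diag(2,1)$, $\tilde D = \diag(1,0)$: then $M^{-1}=\diag(1/2,1)\not\preceq\diag(1,0)=\tilde D^\dagger$), although the conclusion $\tilde D^{1/2}M^{-1}\tilde D^{1/2}\preceq \Id$ that the appendix actually needs is correct and is exactly what your conjugation by $M^{-1/2}$ delivers, since $M^{-1/2}\tilde D M^{-1/2}\preceq M^{-1/2}MM^{-1/2}=\Id$ and conjugation preserves spectra. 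Second, you make explicit the passage from the hypothesis ``$\nabla^2\eta_{\bar y}(\bar x_j)$ invertible'' to $Q_j\succ 0$ via the second-order optimality of the dual certificate; the paper imports $Q_j\succeq 0$ silently from the proof of Theorem~\ref{prop:div_formula}. What the paper's form buys in exchange is an exact identity for the gap, $\rank(\Gamma_{\bar\Xx}) - \dive(\hat\mu)(\bar y) = \tr\pa{\Pi_{\Im(\Gamma_{\bar\Xx})} D M^{-1}}$, rather than only an inequality. One caveat, shared by the paper: the strict inequality tacitly requires $k\ge 1$ and the $\bar x_j$ interior to $\Om$ (so that $\nabla\eta_{\bar y}(\bar x_j)=0$ and the Hessian sign condition hold), which you rightly flag.
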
 

\begin{proof}
Since $\Gamma_{\bar \Xx} M^{-1} \Gamma_{\bar \Xx}^*$ is positive semi-definite, $\dive(\hat \mu)(\bar y)\geq 0$.
A quick computation  (detailed in Appendix \ref{sec:comment-trace}) shows that
$$
\dive(\hat \mu)(\bar y)  = \rank(\Gamma_{\bar \Xx}) - \tr\pa{ \Pi_{\Im(\Gamma_{\bar \Xx})} \begin{pmatrix}
0 & 0\\
0 & \diag(Q)
\end{pmatrix} M^{-1}    } \leq \rank(\Gamma_{\bar \Xx}).
$$
Suppose that $\Gamma_{\bar \Xx}$ is injective, then the above expression reduces to 
$$
\dive(\hat \mu)(\bar y) = (d+1)k - \tr\pa{ \begin{pmatrix}
0 & 0\\
0 & \diag(Q) 
\end{pmatrix} M^{-1}    }
$$
Assuming that $\frac{-1}{\beta_j}\nabla^2 \eta_{\bar y}(\bar x_j)$ is positive definite for all $j\in[k]$, it follows that $Q$ is positive definite and hence, $\dive(\hat \mu)(\bar y) < (d+1)k$: Indeed, we can write $M^{-1} = U \diag(\sigma) U^*$ where $U$ is unitary and $\sigma \in \RR_+^P$. Writing $U = \begin{pmatrix}
U_1 & U_2\\
U_2^* & U_3
\end{pmatrix}$ and $\Sigma_1 = \diag(\sigma_j)_{j=1}^k$ and $\Sigma_2 = \diag(\sigma_j)_{j=k+1}^{P}$, we have $\tr\pa{ \begin{pmatrix}
0 & 0\\
0 & \diag(Q) 
\end{pmatrix} M^{-1}} = \tr\pa{Q^{\frac12} (U_2^*  \Sigma_1 U_2 + U_3\Sigma_2U_3^* )Q^{\frac12}}>0$.
\end{proof}

\begin{proof}\textit{(of Theorem~\ref{prop:div_formula})}
We write $\Xx_y \eqdef (x_i(y))_{i=1}^k$, where $x_i(y)\in \RR^d$.  In the following, to simplify the notation, we sometimes drop to subscript $y$ to write $\beta = \beta_y$, $\Xx = \Xx_y$. 
Recall that we denote by $J_\beta$ the Jacobian of $\beta$ and $J_{x_i}\in \RR^{k\times n}$ the Jacobian of $x_i$. We thus write
\begin{align*}
J_\Xx \eqdef \begin{pmatrix}
J_{x_1}\\
\vdots\\
J_{x_k}
\end{pmatrix} \in \RR^{kd\times n} 
\end{align*}

Recall that  $\eta_y \eqdef -\Phi^*(\Phi_\Xx \be - y)/\lambda$.
Because of the interpolation condition $(\eta_y(x_i))_i=s_y\eqdef \sign(\beta_y)$ and since $|\eta_y| \leq 1$ so that $\nabla \eta_y(x_i)=0$, one has \begin{equation}\label{eq:opt-blasso}
\Phi_\Xx^* \Phi_\Xx \beta = \Phi_\Xx^* y - \lambda s_y \qandq (\Phi_\Xx^{(1)})^* \Phi_\Xx \beta = (\Phi_\Xx^{(1)})^* y.
\end{equation}
 Note that by continuity  of $\beta$, locally, $s_y$ is constant.

Let $D_\beta \eqdef \diag(\beta)\otimes \Id_{d\times d}$.
Differentiating \eqref{eq:opt-blasso} with respect to $y$, we obtain
\begin{equation}\label{eq:deriv1}
  \Phi_\Xx^* \Phi_\Xx J_\beta +\Phi_\Xx^* \Phi_\Xx^{(1)} D_\beta J_\Xx = \Phi_\Xx^*.
\end{equation}
and
\begin{equation}\label{eq:deriv2}
\diag(Z)J_\Xx + (\Phi_\Xx^{(1)})^*\Phi_\Xx J_\beta +(\Phi_\Xx^{(1)})^* \Phi_\Xx^{(1)} D_\beta J_\Xx = (\Phi_\Xx^{(1)})^*,
\end{equation}
where we define $Z_i \eqdef - \lambda \nabla^2 \eta_y(x_i)\in \RR^{d\times d}$ and $\diag(Z)\in \RR^{kd\times sd}$ is the block diagonal matrix whose $i^{th}$ diagonal block is $Z_i$. Note that we have used the fact that $\nabla \eta_y(x_i) = 0$ in deriving \eqref{eq:deriv1}.

Writing $\Gamma_\Xx \eqdef [\Phi_\Xx, \Phi_\Xx^{(1)}] \in \RR^{k(d+1)\times n}$ and
\begin{equation}\label{eq:M}
M \eqdef \Gamma_\Xx^* \Gamma_\Xx   + \begin{pmatrix}
0_{s\times s} & 0_{s\times sd} \\
0_{sd\times s} & \diag(Z) D_\beta^{-1}
\end{pmatrix},
\end{equation} equations \eqref{eq:deriv1} and \eqref{eq:deriv2} can be written as
\begin{equation}\label{eq:deriv}
\Gamma_\Xx^* \Gamma_\Xx \binom{J_\beta}{D_\beta J_\Xx} +\binom{0}{\diag(Z) J_\Xx} = \Gamma_\Xx^*
\iff M \binom{J_\beta}{D_\beta J_\Xx} =  \Gamma_\Xx^*.
\end{equation}
From $Z_j = -\lambda \nabla^2 \eta_y(x_j)$ and $\sign(\beta_j)\eta_y(x_j) = 1$, we have $\sign(\beta_j) \nabla^2 \eta_y(x_j) \preceq 0$ and $Z_j/\beta_j \succeq 0$ for all $j$.  So, $M$ is positive semi-definite.  

Suppose now that $M$ is also invertible. Then, 
$$
	\nabla_y \hat \mu(y) = \Phi_\Xx^{(1)} D_\beta J_\Xx + \Phi_\Xx J_\beta =  
	\Gamma_\Xx  \binom{J_\beta}{D_\beta J_\Xx} = \Gamma_\Xx M^{-1} \Gamma_\Xx^*
$$ 
and the divergence of $\hat\mu$ is
\begin{equation}\label{eq:dof}
\dive(\hat \mu)(y) =  \sum_i \frac{\mathrm{d}\hat\mu_i}{\mathrm{d}y_i}(y) = \tr\pa{\Gamma_\Xx M^{-1} \Gamma_\Xx^*} \geq 0.
\end{equation}
Note also that from the left-hand-side equation of \eqref{eq:deriv},  we can write
\begin{align*}
\Gamma_\Xx \binom{J_\beta}{D_\beta J_\Xx} &= \Gamma_\Xx \Gamma_\Xx^\dagger - \Gamma_\Xx(\Gamma_\Xx^* \Gamma_\Xx)^\dagger \binom{0}{\diag(Z) J_\Xx}\\
&=  \Gamma_\Xx \Gamma_\Xx^\dagger - \Gamma_\Xx(\Gamma_\Xx^* \Gamma_\Xx)^\dagger 
\begin{pmatrix}
0 & 0\\
0 & \diag(Z) D_\beta^{-1}
\end{pmatrix} \binom{J_\beta}{D_\beta J_\Xx}\\
&= \Gamma_\Xx \Gamma_\Xx^\dagger - \Gamma_\Xx(\Gamma_\Xx^* \Gamma_\Xx)^\dagger 
\begin{pmatrix}
0 & 0\\
0 & \diag(Z) D_\beta^{-1}
\end{pmatrix} M^{-1} \Gamma_\Xx^*.
\end{align*}


\end{proof}

In order for equation \eqref{eq-formula} to be valid, we need to prove that the matrix $M$  defined in~\eqref{eq:M} is invertible and that $y\mapsto \beta_y$ and $y\mapsto \Xx_y$ are Lipschitz. This is the subject of the subsequent sections.


\section{Divergence of the  \blasso}\label{sec:divergence}

In this section, we show that the divergence of the \blasso~can be explicitly computed in the case where  $y\in \RR^n$ is such that:
\begin{itemize}
\item[(A1)] The extended support $\Ee_{y}$ is a discrete set consisting of $m$ points for some $m\in \NN$ and $\eta_y(x) \nabla^2\eta_y(x) \prec 0$ for all $x\in \Ee_y$.
\end{itemize}
It is known \cite{duval2015exact} that this  ensures that the size of the extended support  remains constant locally around $y$. In particular, 
there exists a neighbourhood $U$ around $y$ such that for all $y'\in U$, $\Ee_{y'}$ is also discrete with $m$ points and $y' \in U \mapsto \Ee_{y'}$ is a continuous mapping.

If one additionally has that $\Phi_\Ee$ is injective, then  uniqueness and continuity of the recovered positions and amplitudes is guaranteed. Establishing support stability is less clear in the case where injectivity of $\Phi$ on $\Ee$ fails. Nonetheless,  in this section, we show that one can still obtain a support stability result on a subset of the extended support, provided that $y$ satisfies (A1) and does not lie in the following set $\Hh$: 
\begin{equation}\label{eq:setH-general}
\begin{split}
\Hh \eqdef \bigcup_{k=1}^\infty \bigcup_{\sigma\in \{-1,+1\}^k} \bigcup_{S\subseteq [k]} \bigcup_{I \subseteq S}  \mathrm{Bd}(\Pi_Y (\Qq_{k,S,I,\sigma}))
\end{split}
\end{equation}
where $\mathrm{Bd}(S)$ is the boundary of a set $S$, $\Pi_Y:  \RR^n \times \RR^k \times \Om^k \to \in \RR^n$ is the projection mapping $(y,a,\Ee) \mapsto y$ and
\begin{align*}
\Qq_{k,S,\sigma,I}\eqdef \Bigg\{(y,a,\Ee)&\in \RR^n\times \RR^k \times \Om^k;\;\\
&\Phi^*(y - \Phi_\Ee a)\in\lambda \partial \abs{\mes_{a,\Ee}}_{TV} , M\eqdef \Phi_\Ee, \rank(M) = \abs{S}\\
& \pa{ (M^\dagger  + M_{S}^{\dagger} M_{S^c}  (M)^\dagger_{(S^c,\cdot)})(y - (M^*)^\dagger \lambda \sigma )   }_I = 0_I  
\Bigg\}.
\end{align*}

\begin{rem}
In the finite dimensional case of the LASSO, a divergence formula is established in~\cite{dossal2013degrees} outside a set of measure zero, which was shown to be a union of hyperplanes. In our case, 
we wish to show that $\Hh$ (which is no longer composed of affine spaces) is a set of Lebesgue measure zero in $\RR^n$. Intuitively, this should follow from the fact that $\Hh$ is a countable union of boundaries of subsets of $\RR^n$, and this is of zero measure if these boundaries do not ``oscillate'' too wildly. 
%
This is ensured for quite general class of models $\phi(x)$ if they are semi-algebraic sets (which is the case for Fourier measurements and neural networks with a ReLu non-linearity), and more generally (for instance for Gaussian functions), using the notion of definable sets in o-minimal geometry \cite{coste1999introduction}, a generalization of real algebraic geometry.
The construction of this set $\Hh$ is inspired by the construction of the so-called transition space in \cite{vaiter2017degrees}. We however highlight that arguments in \cite{vaiter2017degrees} are valid only in the finite-dimensional setting since in particular they rely on enumerating all possible active manifolds, which is not possible in our setting.  
In Appendix \ref{sec:ominimal}, we recall some notions from o-minimal geometry and show that $\Hh$ is of zero measure under the assumption that $x \mapsto \phi(x)$ is \textit{definable}.
\end{rem}

\begin{rem}
Intuitively, in order to establish smoothness of the recovered parameters $\beta_y$ and $\Xx_y$, we need to require that locally around $y$, there exists solutions such that the number of recovered parameters remain constant, and the rank of $\Phi$ restricted to the extended support has constant rank. This is the idea behind the definition of the sets $\Pi_Y(\Qq_{k,S,I,\sigma})$, so we have differentiability of the recovered parameters away from the boundaries of such sets.
\end{rem}

We first show that one can construct a solution which is supported on a subset $\Aa$ of the extended support such that $\Phi_\Aa$ is injective, a similar statement is proved in \cite[Appendix B]{rosset2004boosting}, however, we include a proof for completeness.

\begin{lem}\label{lem:inj}
Suppose that $\Ee_y$ is discrete. Then, there exists $\Aa\subset \Ee_y$ and a solution to \eqref{eq:blasso} with support $\Aa$  such that that $\Phi_\Aa$ is injective.
\end{lem}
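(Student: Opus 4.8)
The plan is to start from a finitely supported optimal measure and then repeatedly discard atoms by moving along a direction in the kernel of the restricted operator, in a way that preserves optimality, until the restriction becomes injective. The engine of the argument is that the dual certificate $\eta_y$ is \emph{fixed} (it depends only on $p_y$, which is unique by strong convexity of the dual), so any competitor with the same image under $\Phi$ and the same total variation is automatically a solution of \eqref{eq:blasso}.

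First I would invoke the representer theorem~\cite{fisher1975spline,unser2017splines,boyer2019representer} to obtain a solution $\mes_y = \sum_{j=1}^N \beta_j \delta_{x_j}$ with $N \le n$, distinct points $x_j$ and $\beta_j \neq 0$; since $\mathrm{Supp}(\mes_y) \subseteq \Ee_y$, the set $\Aa \eqdef \ens{x_1,\dots,x_N}$ is a finite subset of $\Ee_y$. If $\Phi_\Aa$ is injective we are done. Otherwise, I pick $0 \neq h = (h_j)_j \in \ker \Phi_\Aa$ and consider the curve $\mes_t \eqdef \sum_j (\beta_j + t h_j) \delta_{x_j}$. Because $\Phi_\Aa h = 0$ we have $\Phi \mes_t = \Phi \mes_y$ for all $t$, so the data-fidelity term is frozen along the curve.

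The key computation is that the total variation is frozen as well near $t=0$. Using $\sign(\beta_j) = \eta_y(x_j) = \dotp{p_y}{\phi(x_j)}$, I would compute
\[
	\sum_{j} \sign(\beta_j)\, h_j = \dotp{p_y}{\textstyle\sum_j h_j \phi(x_j)} = \dotp{p_y}{\Phi_\Aa h} = 0 .
\]
Hence, on the maximal interval of $t$ containing $0$ for which every nonzero $\beta_j + t h_j$ keeps the sign of $\beta_j$, one has $\abs{\mes_t}_{TV} = \sum_j \sign(\beta_j)(\beta_j + t h_j) = \abs{\mes_y}_{TV}$, so $\mes_t$ attains the same objective value and is therefore a solution. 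Since $h \neq 0$ and all $\beta_j \neq 0$, this sign-preservation interval is bounded (some $h_{j_0}\neq 0$ forces $\beta_{j_0}+t h_{j_0}$ to vanish at the finite time $-\beta_{j_0}/h_{j_0}$), and at one of its endpoints $t^\star$ at least one coefficient $\beta_j + t^\star h_j$ becomes zero. By continuity the two invariance properties persist at $t^\star$, so $\mes_{t^\star}$ is a solution supported on a strict subset of $\Aa$, still contained in $\Ee_y$.

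Iterating this reduction strictly decreases the finite number of atoms at each step, so after finitely many steps the remaining support $\Aa \subseteq \Ee_y$ carries a solution with $\Phi_\Aa$ injective. The step I expect to require the most care is the simultaneous preservation of optimality along the curve: the data term is immediately frozen from $h \in \ker \Phi_\Aa$, whereas freezing the total variation hinges precisely on the orthogonality $\dotp{p_y}{\Phi_\Aa h} = 0$ together with the sign-constancy bookkeeping, and one must also verify that the interval is bounded so that an atom is genuinely annihilated at $t^\star$.
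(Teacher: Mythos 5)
Your proposal is correct and follows essentially the same route as the paper's proof: take a discrete solution, perturb its coefficients along a direction $h\in\ker\Phi_\Aa$ (which freezes both the data term and, thanks to $\sum_j \sign(\beta_j)h_j=0$, the total variation), move to the first sign-crossing to annihilate an atom, and iterate. The only cosmetic difference is that you derive the orthogonality $\sum_j \sign(\beta_j)h_j=0$ from the dual certificate via $\sign(\beta_j)=\eta_y(x_j)=\dotp{p_y}{\phi(x_j)}$ and $\Phi_\Aa h=0$, whereas the paper obtains it from first-order optimality of $t\mapsto\norm{\beta+th}_1$ at $t=0$; both are valid, and your endpoint bookkeeping is in fact a bit cleaner than the paper's.
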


\begin{proof}
Since any solution to \eqref{eq:blasso} has $\mathrm{Supp}(\mes) \subseteq \Ee_y$, there exists a solution of the form $\mes =  \mes_{\beta,\Aa}$, where $\Aa\subset \Ee_y$. Suppose that $\Phi_\Aa$ is not injective. Then there exists $b$ such that $\Phi_\Aa b = 0$, and so, for any $t\in \RR$, by defining $\mes_t \eqdef \mes_{\beta+tb,\Aa}$, we have $\Phi \mes_t = \Phi \mes$.     Moreover, for all $t$ sufficiently small, we have $s_j \eqdef \sign(\beta_j) = \sign(\beta_j+tb_j)$ for all $t$ sufficiently small, and since $\mes$ is a solution,  $\norm{\beta}_1 \leq  \norm{\beta+tb}_1 = \sum_j (\beta_j+tb_j) s_j = \norm{\beta}_1 + t \sum_j b_j s_j$ for all  $t$ sufficiently small which implies that $\sum_j b_j s_j = 0$.  
For $v\in\{+,-\}$, let $A^{+,v}\eqdef \enscond{j}{b_j \sign(\beta_j) >0,  \sign(\beta_j) = v}$ and  $A^{-,v}\eqdef \enscond{j}{b_j \sign(\beta_j) \leq 0,  \sign(\beta_j) = v}$. Note that either  $A^+ \eqdef A^{+,+}\cup A^{+,-}  \neq \emptyset$ or  $A^- \eqdef A^{-,+}\cup A^{-,-} \neq \emptyset$ .
 
 Suppose that either $A^-=\emptyset$ or $\norm{b_{A^-}}_\infty = 0$, then for all $j$, either $b_j \sign(\beta_j) >0$ or $b_j = 0$, so $\sign(\beta_j + tb_j) = \sign(\beta_j)$ for all $t>0$ and all $j$. Suppose that $A^-\neq \emptyset$ and $\norm{b_{A^-}}_\infty \neq 0$, and let $t \eqdef \min_{i\in A^-} \abs{\beta_i}/ \norm{b_{A^-}}_\infty>0$. Then, clearly, $\sign(\beta_j + t b_j ) = \sign(\beta_j)$ for all $j\in A^+$. For $j\in A^{-,+}$,  $\beta_j >0$ and $\beta_j + t b_j \geq \beta_j - \min_i \abs{\beta_i} \geq 0$, and for $j\in A^{-,-}$, $\beta_j <0$ and $\beta_j + t b_j \leq \beta_j + \min_i \abs{\beta_i} \leq 0$. In particular, there exists $t>0$ such that for all $j$, either $\sign(\beta_j+ t b_j) = \sign(\beta_j)$ or $\beta_j + tb_j = 0$. Let $t_1>0$ be the largest such $t$. If $\abs{\beta+ t_1 b}$ has all nonzero entries, then we can repeat this argument on $\beta' \eqdef \beta + t_1 b$ to obtain $t_2>0$ such that $\sign(\beta'+ t_2 b) = \sign(\beta') = \sign(\beta)$. But this is a contradiction to $t_1$ being the largest such $t$. Therefore, $\mes_{\beta+t_1 b,\Aa}$ is supported on at least one less point than $\mes_{\beta,\Aa}$. 
\end{proof}

We now state and prove our first main theorem, which provides sufficient conditions under which Proposition \ref{prop:div_formula} can be applied.

\begin{thm}\label{thm:cont_ext_supp1}
Assume that (A1) holds and $y\not\in \Hh$.
Let $J\subseteq [m]$ of a set of cardinality $k$, such that $\Aa_y \eqdef (\Ee_{y})_J$ satisfies that $\Phi_{\Aa_y}$ is injective and $\mes_{\beta_y,\Aa_y}\in \argmin \Pp_\la(y)$ for some $\beta_y\in\RR^k$ having all non-zero entries (which is possible by Lemma \ref{lem:inj}).
 Then, there exists a neighbourhood $U$ of $y$ such that for all $y'\in U$, there exists $\beta_{y'}\in \RR^k$ and $\Aa_{y'}\in \Om^k$ such that $\mes_{\beta_{y'},\Aa_{y'}}$ solves $\Pp_\la(y')$. Moreover, the mapping $y'\mapsto (\beta_{y'}, \Aa_{y'})$ is $\Cc^1$. 
\end{thm}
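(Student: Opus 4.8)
The plan is to build the smooth family $(\beta_{y'},\Aa_{y'})$ by the implicit function theorem applied to the first-order optimality system, and then to promote these first-order critical configurations to genuine global minimizers of $\Pp_\la(y')$ using the hypothesis $y\notin\Hh$. First I would set up the system. A $k$-atom measure $\mes_{\beta,\Xx}$ with sign pattern $\sigma=\sign(\beta_y)$ (locally constant, since $\beta_y$ has no vanishing entry) is a solution exactly when its certificate $\eta=\Phi^*(y-\Phi_\Xx\beta)/\la$ satisfies $\eta(x_j)=\sigma_j$ and $\nabla\eta(x_j)=0$ for every $j$, i.e. the equations \eqref{eq:opt-blasso}. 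Collecting these into a map
\[
F(\beta,\Xx,y)=\begin{pmatrix}\Phi_\Xx^*(\Phi_\Xx\beta-y)+\la\sigma\\(\Phi_\Xx^{(1)})^*(\Phi_\Xx\beta-y)\end{pmatrix},
\]
one has $F(\beta_y,\Aa_y,y)=0$, and, differentiating with respect to $(\beta,\Xx)$ exactly as in the computation leading to \eqref{eq:deriv} (after reparametrising the position increments through the invertible $D_\beta=\diag(\beta)\otimes\Id_{d\times d}$, licit because $\beta_y$ has nonzero entries), the differential is precisely the matrix $M$ of \eqref{eq:M}.

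The crux of the smoothness step is that $M$ is invertible at $(\beta_y,\Aa_y,y)$. Indeed $\Gamma_{\Aa_y}^*\Gamma_{\Aa_y}\succeq0$, while (A1) gives $\sign(\beta_j)\nabla^2\eta_y(x_j)\prec0$ and hence $\diag(Z)D_\beta^{-1}\succ0$; so any $(u,v)$ in the kernel of $M$ satisfies both $v^*\diag(Z)D_\beta^{-1}v=0$ and $\Gamma_{\Aa_y}\binom{u}{v}=0$, forcing $v=0$ and then $\Phi_{\Aa_y}u=0$, whence $u=0$ by injectivity of $\Phi_{\Aa_y}$. Thus $M\succ0$, and the implicit function theorem furnishes a neighbourhood $U_1$ of $y$ together with a $\Cc^1$ map $y'\mapsto(\beta_{y'},\Aa_{y'})$, unique near $(\beta_y,\Aa_y)$, solving $F(\beta_{y'},\Aa_{y'},y')=0$.

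The \textbf{main obstacle} is that $F=0$ only encodes the \emph{first-order} conditions at the $k$ tracked points: it does not by itself guarantee that $\mes_{\beta_{y'},\Aa_{y'}}$ is a global minimiser, nor that its certificate stays below $1$ in modulus near the $m-k$ remaining extended-support points, where $\eta_y$ already saturates. This is exactly where $y\notin\Hh$ enters. By (A1) and the support-stability result of \cite{duval2015exact}, the extended support $\Ee_{y'}$ remains discrete with $m$ points and moves continuously on a neighbourhood of $y$. Our configuration exhibits $y$ as a point of $\Pi_Y(\Qq_{m,S,\sigma',I})$, where $S$ records the rank of $\Phi_{\Ee_y}$, $\sigma'$ the full sign pattern, and $I=J^c$ the $m-k$ inactive extended-support indices. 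Since $y$ lies on the boundary of none of the sets $\Pi_Y(\Qq_{m,S,\sigma',I'})$, membership in them is locally constant, which pins the \emph{exact} active pattern: on a neighbourhood $U_0$ every $y'$ admits a genuine solution supported precisely on the $J$-indexed extended-support points (zero on $J^c$, nonzero on $J$), with $\Phi$ injective there.

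It then remains to identify this genuine solution with the implicit-function family. On $U=U_0\cap U_1$, the genuine solution of the previous step satisfies the very system $F=0$ with the same sign pattern, and, since its support is the $J$-indexed part of the continuously varying extended support, it lies near $(\beta_y,\Aa_y)$; the local uniqueness provided by the implicit function theorem then forces it to coincide with $(\beta_{y'},\Aa_{y'})$. Consequently $\mes_{\beta_{y'},\Aa_{y'}}$ is a genuine minimiser of $\Pp_\la(y')$ for every $y'\in U$ and $y'\mapsto(\beta_{y'},\Aa_{y'})$ is $\Cc^1$. I expect the delicate point to be the pinning of the active pattern, since—unlike the finite-dimensional Lasso analyses of \cite{dossal2013degrees,vaiter2017degrees}—the candidate supports cannot be enumerated, so the argument must run through the boundary structure of the sets $\Qq$ rather than through a finite list of active manifolds.
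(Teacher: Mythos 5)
Your proposal follows essentially the same route as the paper's own proof: the implicit function theorem applied to the stationarity system \eqref{eq:opt-blasso}, with invertibility of the differential obtained exactly as you argue from (A1) together with injectivity of $\Phi_{\Aa_y}$, followed by the same use of $y\notin\Hh$ (interiority of $\Pi_Y(\Qq_{m,S,s_y,I})$) and continuity of the extended support to produce a genuine nearby solution that is then identified with the implicit-function branch by local uniqueness. The only real difference is bookkeeping in the second step: the paper takes $I=S\setminus J$ for a maximal-rank subset $S\supseteq J$ and constructs the nearby solution explicitly through Lemma \ref{lem_sol}, adding a kernel correction $\bar b$ to kill the $S^c$ entries and invoking continuity to preserve the sign pattern on $J$, whereas you assert this ``pinning of the active pattern'' more briefly.
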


\begin{rem}
Given $y\not \in \Hh$ such that (A1) holds,  the divergence formula \eqref{eq:dof} is valid with support $\Xx = \Aa_y$.
\end{rem}

The remainder of this section is devoted to the proof of this Theorem.
To prove it, we construct a $\Cc^1$ path for solutions to \eqref{eq:blasso} in a small neighbourhood of $y$.
 Let $s_y \eqdef (\eta_y)_{\restriction_{\Aa_y}}$.  Define the function
$$
F : (\beta,\Aa,y)\in \RR^{k}\times \Omega^k \times\RR^n \mapsto \Gamma_\Aa^* \pa{\Phi_\Aa \beta - y} + \lambda\binom{s_y}{0_{kd}}\in \RR^{k(d+1)}
$$
where $\Gamma_\Aa = [\Phi_\Aa,\Phi'_{\Aa}]$. 
We have $\partial_y F = -\Gamma_\Aa^*$, and writing  $u \eqdef (\beta,\Aa)$,
\begin{align*}
\partial_u F &= \Gamma_\Aa^* \Gamma_A \begin{pmatrix}
\Id_k & 0\\
0 & \diag(\beta)\otimes \Id_d
\end{pmatrix} + \begin{pmatrix}
\Id_k & 0\\
0 & \diag((z_i)_i)
\end{pmatrix} \\
&= \pa{\Gamma_\Aa^* \Gamma_\Aa  + \begin{pmatrix}
\Id_k & 0\\
0 & \diag((Z_i/\beta_i)_i)
\end{pmatrix}  } \begin{pmatrix}
\Id_{k} & 0\\
0 & \diag(\beta)\otimes \Id_{d}
\end{pmatrix}
\end{align*}
where $Z_i = \dotp{\Phi_\Aa \beta - y}{\nabla^2 \phi(x_i)}\in\RR^{d\times d}$ if $\Aa = (x_i)_{i=1}^k$. Therefore, $\partial_u F$ in invertible because
$$
\ker \pa{\Gamma_\Aa^* \Gamma_\Aa} \cap \ker\pa{\begin{pmatrix}
\Id_k & 0\\
0 & \diag((Z_i/\beta_i)_i)
\end{pmatrix}} = \{0\},
$$ since $\Phi^*_\Aa\Phi_\Aa$ is invertible, and $\frac{1}{\beta_i} Z_i \succ 0$ for all $i$. So,
we can apply the implicit function theorem to define a function $g$ in a small neighbourhood $U$ around $y$, such that $g: y'\in U\mapsto (\beta', \Aa')$ is a $\Cc^1$ function.

It remains to show that given $(\be', \Aa') = g(y')$, $\mes_{\be',\Aa'}$ is indeed a solution of $\Pp_\la(y')$.
To this end, given $y'\in U$, we simply need to construct a sparse solution made of $k$ diracs, with support $\Ss$ and amplitude $\alpha$ such that $\norm{\Ss - \Aa}$  and $\norm{\alpha-\be}$ are sufficiently small. Then by uniqueness of the implicit function $g$, this would allow us to conclude that $\alpha = \be'$ and $\Ss = \Aa'$.

We make use of the following lemma, which gives an explicit formula for a solution of \eqref{eq:blasso} when the extended support is discrete.

\begin{lem}\label{lem_sol}
Given $y$, let $\eta \eqdef \eta_y$ be its dual certificate,  $\Ee \eqdef \Ee_y$ be the extended support, and assume that $\Ee$ is a discrete point set. Let $s = \eta_{\restriction_\Ee}$.   
Any solution to \eqref{eq:blasso} can be written as $\mes_{\beta,\Ee}$ where
\begin{equation}\label{eq:lassosol}
\beta \eqdef (\Phi_\Ee)^\dagger (y - (\Phi_\Ee^*)^\dagger \lambda s) + b
\end{equation}
for some $b\in \ker(\Phi_\Ee)$.
Moreover, by defining
$$
\beta \eqdef (\Phi_\Ee)^\dagger (y - (\Phi_\Ee^*)^\dagger \lambda s),
$$
we have that
$\mes_{\beta,\Ee}$ is a solution to \eqref{eq:blasso}.
\end{lem}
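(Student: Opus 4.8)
The plan is to describe the full solution set through the unique dual certificate $\eta_y$, exploiting the two facts recalled above: every solution of $\Pp_\la(y)$ is supported in the extended support $\Ee$, and the common image $\hat\mu(y)=\Phi\mes_y$ is independent of the chosen solution. Since $\Ee=\{x_1,\dots,x_m\}$ is discrete, any solution has the form $\mes_{\beta,\Ee}$ with $\Phi\mes_{\beta,\Ee}=\Phi_\Ee\beta=\hat\mu(y)$. Evaluating the identity $\eta_y=\Phi^*\big((y-\Phi\mes_y)/\la\big)$ at the points $x_j\in\Ee$, where $\eta_y(x_j)=s_j$, turns the certificate condition into the finite linear system $\Phi_\Ee^*(y-\Phi_\Ee\beta)=\la s$, equivalently $\Phi_\Ee^*\Phi_\Ee\,\beta=\Phi_\Ee^* y-\la s$. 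Thus every solution's amplitude must satisfy this system.

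Because a sparse solution exists (existence is recalled above), the system is consistent, so its solution set is the affine space $(\Phi_\Ee^*\Phi_\Ee)^\dagger(\Phi_\Ee^* y-\la s)+\ker(\Phi_\Ee^*\Phi_\Ee)$, and $\ker(\Phi_\Ee^*\Phi_\Ee)=\ker(\Phi_\Ee)$. The stated formula is merely a rewriting of the particular solution: using the Moore--Penrose identities $\Phi_\Ee^\dagger=(\Phi_\Ee^*\Phi_\Ee)^\dagger\Phi_\Ee^*$ and $(\Phi_\Ee^*)^\dagger=\Phi_\Ee(\Phi_\Ee^*\Phi_\Ee)^\dagger$, one verifies $(\Phi_\Ee^*\Phi_\Ee)^\dagger(\Phi_\Ee^* y-\la s)=\Phi_\Ee^\dagger\big(y-(\Phi_\Ee^*)^\dagger\la s\big)=:\beta^\star$, the only nonroutine step being $(\Phi_\Ee^*\Phi_\Ee)^\dagger\Phi_\Ee^*(\Phi_\Ee^*)^\dagger=(\Phi_\Ee^*\Phi_\Ee)^\dagger$, valid since $\Phi_\Ee^*$ and $(\Phi_\Ee^*)^\dagger$ share the range $\Im(\Phi_\Ee^*)$. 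This establishes the first assertion, that every solution is $\mes_{\beta^\star+b,\Ee}$ with $b\in\ker(\Phi_\Ee)$.

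For the \emph{moreover} I would check optimality of the particular representative $\mes_{\beta^\star,\Ee}$ against the subdifferential criterion recalled above: $\mes_{\beta^\star,\Ee}$ solves $\Pp_\la(y)$ iff $\eta_y\in\partial\abs{\mes_{\beta^\star,\Ee}}_{TV}$. The fit condition $\Phi_\Ee\beta^\star=\hat\mu(y)$ holds automatically since $\beta^\star$ solves the system, and $\norm{\eta_y}_\infty\le 1$ always; the only remaining requirement is the saturation identity $\int\eta_y\,\mathrm{d}\mes_{\beta^\star,\Ee}=\abs{\mes_{\beta^\star,\Ee}}_{TV}$, that is $\sum_j s_j\beta^\star_j=\sum_j\abs{\beta^\star_j}$, equivalently $\sign(\beta^\star_j)=s_j$ whenever $\beta^\star_j\neq 0$.

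This sign/saturation condition is the \emph{main obstacle}, and it is genuinely delicate because $\beta^\star=\Phi_\Ee^\dagger\hat\mu(y)$ is the minimal-$\ell^2$ representative of the solution space: one always has $\sum_j\abs{\beta^\star_j}\ge\sum_j s_j\beta^\star_j=\dotp{p_y}{\hat\mu(y)}=\abs{\mes_y}_{TV}$, with equality forced only by the sign condition, which the minimal-$\ell^2$ point need not satisfy in general. When $\Phi_\Ee$ is injective the difficulty evaporates: $\ker(\Phi_\Ee)=\{0\}$, the affine space collapses to the single point $\beta^\star$, and since some genuine (hence sign-consistent) solution must lie in it, $\beta^\star$ is optimal. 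In the non-injective case I would not argue on $\Ee$ directly but first pass, via Lemma~\ref{lem:inj}, to a subset $\Aa\subseteq\Ee$ with $\Phi_\Aa$ injective still carrying a solution, and run the injective argument there; this reduction is precisely the mechanism through which assumption (A1) and the genericity condition $y\notin\Hh$ enter when the lemma is invoked in Theorem~\ref{thm:cont_ext_supp1}.
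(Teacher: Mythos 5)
Your first assertion (the affine parameterization of all solutions) is proved correctly, and by essentially the same mathematics as the paper: the paper observes that every solution is supported in $\Ee$, reduces the lemma to the finite-dimensional \lasso~over the grid $\Ee$, and quotes \cite{tibshirani2012degrees}; you instead re-derive the quoted representation by hand, from the certificate identity $\Phi_\Ee^*(y-\Phi_\Ee\beta)=\la s$ and the Moore--Penrose identities. That half is fine and self-contained.

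The gap is in the ``moreover'' part, precisely at the point you yourself flag as the main obstacle. What must be shown is that the distinguished vector $\beta^\star\eqdef\Phi_\Ee^\dagger\pa{y-(\Phi_\Ee^*)^\dagger\la s}$ --- the minimal-$\ell_2$-norm point of the affine space, supported on the \emph{whole} of $\Ee$ --- satisfies the sign condition. Your injective-case argument is correct but vacuous there, and the proposed reduction via Lemma~\ref{lem:inj} does not prove the claim: it produces \emph{some} sign-consistent point of the affine space, supported on an injective subset $\Aa\subseteq\Ee$, but this gives no control whatsoever on the sign pattern of the different point $\beta^\star$ when $\ker(\Phi_\Ee)\neq\{0\}$; knowing that the (convex) solution set meets the affine space does not imply that the minimal-norm point of that affine space belongs to it. Indeed your caution is well placed, because without further assumptions the claim can fail. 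Take three points $x_1,x_2,x_3\in\Om$ with $\phi(x_1)=(1,1)$, $\phi(x_2)=(1,-1)$, $\phi(x_3)=(1,0)$, arranged so that $\abs{\dotp{\phi(x)}{(1,0)}}\le 1$ on $\Om$ with equality exactly at these three points. For $y=(4,-3)$ and $\la=1$, the measure $\mes=3\delta_{x_2}$ has residual $y-\Phi\mes=(1,0)$, hence $\eta_y=\dotp{\phi(\cdot)}{(1,0)}$ certifies that it solves \eqref{eq:blasso}, with $\Ee_y=\{x_1,x_2,x_3\}$ and $s=(1,1,1)$; every solution is of the form $\beta=(t,3+t,-2t)$ on $\Ee_y$, and the sign condition forces $t=0$, so $3\delta_{x_2}$ is the \emph{unique} solution. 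Yet $\beta^\star=(-1/2,\,5/2,\,1)$ (the point $t=-1/2$), which violates the sign condition, and $\abs{\mes_{\beta^\star,\Ee}}_{TV}=4>3$, so $\mes_{\beta^\star,\Ee}$ is not a solution. So the ``moreover'' statement cannot be obtained by your route: it holds trivially when $\Phi_\Ee$ is injective (the only case in which the paper later uses it, cf.\ Lemma~\ref{lem:fourier_inj}), and otherwise requires a genericity restriction on $y$ of the kind appearing in the finite-dimensional \lasso~literature that the paper cites; as stated unconditionally it is not something a completed version of your argument could establish.
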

\begin{proof}
Any solution of \eqref{eq:blasso} has support included in $\Ee$. Therefore, $\mes_{\beta, \Ee}$ is a solution of \eqref{eq:blasso} if and only if $\beta$ solves the following \lasso~problem:
\begin{equation}
\min_{\beta\in \RR^{\abs{\Ee}}} \norm{\beta}_1 + \frac{1}{2\la} \norm{\Phi_\Ee \beta - y}_2^2.
\end{equation}
Since $\Ee$ is also the extended support of this problem, we know from \cite{tibshirani2012degrees} that solutions are of the form \eqref{eq:lassosol} and that 
$\beta = (\Phi_\Ee)^\dagger (y - (\Phi_\Ee^*)^\dagger \lambda s)$ is a solution.
\end{proof}

\begin{proof}[Proof of Theorem \ref{thm:cont_ext_supp1}]
Let $\beta_y$ have support $J$ be such that $\mes_{\beta_y,\Ee_y}$ solves \eqref{eq:blasso}.
We first present some properties of   $\beta\eqdef \beta_y$:  Define $M^y \eqdef \Phi_{\Ee_y}$.
By Lemma \ref{lem_sol}, there exists  $b\in \ker(M^y)$ such that
$$
\beta = (M^y)^\dagger (y - ((M^y)^*)^\dagger \lambda s_y) + b, \qwhereq s_y = (\eta_y)_{\restriction \Ee_y}.
$$
Let $m \eqdef \abs{\Ee_y}$ and let $S\subseteq [m]$ be such that $S\supseteq J$, $\abs{S} = \rank(M^y)$ and $M^y_S$ is injective.
Since $\beta_{J^c} = 0$, we have $-b_{J^c} =  (M^y)^\dagger_{(J^c,\cdot)} (y - ((M^y)^*)^\dagger \lambda s_y)$. In particular, $-b_{S^c} =  (M^y)^\dagger_{(S^c,\cdot)} (y - ((M^y)^*)^\dagger \lambda s_y)$. Also, $$M^y_S b_{S} = - M^y_{S^c}b_{S^c} =M^y_{S^c} (M^y)^\dagger_{(S^c,\cdot)} (y - ((M^y)^*)^\dagger \lambda s_y)
$$
which implies that 
$$ b_{S} =  (M^y_{S})^{\dagger} M^y_{S^c} (M^y)^\dagger_{(S^c,\cdot)} (y - ((M^y)^*)^\dagger \lambda s_y)
$$
since $M_S^y$ is injective. Also, letting  $I \eqdef S\setminus J$, we have that 
$$
\beta_I = \pa{ \pa{(M^y)^\dagger +
 (M^y_{S})^{\dagger} M^y_{S^c} (M^y)^\dagger_{(S^c,\cdot)} }(y - ((M^y)^*)^\dagger \lambda s_y)}_I = 0_I.
$$
So, $y\in \Pi_Y(\Qq_{m,S,s_y,I})$.
Since $y\not\in \Hh$, it is in the interior of $\Pi_Y(\Qq_{m,S,s_y,I})$, we have that for some $\epsilon>0$ and all $y'$ in the ball $\Bb_\epsilon(y)$ of radius $\epsilon$,  there exists 
$\beta',\Ee'$ such that $\mes_{\beta',\Ee'}$ is a solution. So, $\Ee'$ is contained in the extended support $\Ee_{y'}$. But by continuity of the extended support (due to Proposition \ref{prop:cont_ext}), we must have that $\Ee'$ is exactly the extended support at $y'$. Define now $M^{y'}\eqdef \Phi_{\Ee'}$ and note that (again because $y$ is in the interior) $\rank(M^{y'})  = \rank(M^y)$ and it satisfies
\begin{equation}\label{eq:suppI0}
\pa{ \pa{(M^{y'})^\dagger +
(M^{y'}_{S})^{\dagger} M^{y'}_{S^c} (M^{y'})^\dagger_{(S^c,\cdot)}} f(y')
}_I = 0_I,
\end{equation}
where we define $f(y') \eqdef  (y' - ((M^{y'})^*)^\dagger \lambda s_y)$.

We now construct a solution for  $\Pp_\lambda(y')$ with support $\Aa' \eqdef \Ee'_J$.
By continuity of the extended support, $M^{y'} \to M^y$ and $M^{y'}_S$ is injective, and since rank is preserved,  $((M^{y'})^*)^\dagger \to ((M^{y})^*)^\dagger$.

Define $\bar b$ by
\begin{equation}\label{eq:b_dash0}
-\bar b_{S^c} = (M^{y'})^\dagger_{(S^c,\cdot)} f(y') \qandq  M^{y'}_S \bar b_{S} = -M^{y'}_{S^c} \bar b_{S^c}.
\end{equation}
 Note that the latter is a consistent definition because $\mathrm{Rank}(M^{y'}_S) = \abs{S} = \mathrm{Rank}(M^{y'})$ so $\mathrm{Col}(M^{y'}_{S^c}) \subset \mathrm{Col}(M^{y'}_S )$ (where $\mathrm{Col}(M)$ denotes the column space of matrix $M$). So, $\bar b \in \ker(M^{y'})$. Define $\bar \beta\in \RR^{m}$ by
$$
\bar \beta  \eqdef (M^{y'})^\dagger f(y') + \bar b
$$
where $f(y') = y' - ((M^{y'})^*)^\dagger \lambda s_y$. If we can show that
\begin{equation}\label{eq:toshow}
\forall j\in J, \; \sign(\bar \beta)_j = (s_y)_j \qandq \forall j\not\in J, \; \bar \beta_j = 0
\end{equation}
 then we must have that $\mes_{\bar \beta,\Ee'}$ has support $\Aa'$ and is a solution to $\Pp_\la(y')$. Moreover, we can then apply the implicit function theorem to conclude that the amplitudes and positions follow a $\Cc^1$ path locally around $y$.

To prove \eqref{eq:toshow}:
First, we have $\bar \beta_{J^c}= 0$ since \eqref{eq:suppI0} implies that $\bar \beta_I = 0_I$ and  \eqref{eq:b_dash0} implies that $\bar \beta_{S^c} = 0$. It remains to consider $\beta'_{J}$. We can write
$$
\bar \beta = (M^{y'})^\dagger f(y') +b + \bar b -b = \beta + (M^{y'})^\dagger f(y')  - (M^{y})^\dagger f(y) + \bar b -b .
$$
Note that $f$ and $\bar b$ are continuous as $y'$ changes, and since $\beta_J$ has all non-zero entries, $\sign(\bar \beta_J ) = \sign(\beta_J)$ when $y'$ is sufficiently close to $y$.

\end{proof}

\section{Degrees of freedom for Fourier sampling in 1D}\label{sec:fourier}

In this section, we consider the special case of sampling the Fourier coefficients up to some cut-off $f_c\in\NN$ of a 1-D real-valued measure (i.e. $d=1$), supported on $\Om=\TT \eqdef \RR/\ZZ$. This corresponds to using 
$\phi(x) = \pa{1, \pa{\sqrt{2}\sin(2\pi \dotp{x}{\ell})}_{\ell =1}^{f_c}, \pa{\sqrt{2}\cos(2\pi \dotp{x}{\ell})}_{\ell =1}^{f_c} }$ and $n= 2f_c+1$. Note that  we can write 
\begin{equation}\label{equiv_complex}
\Phi \mes = U \Psi \mes,\qwhereq	\Psi \mes \eqdef \pa{ \int_\TT e^{-2\imath\pi \ell x}\mathrm{d}\mes(x) }_{\ell =-f_c}^{f_c}
\end{equation}
and given $y_1,y_{-1}\in\CC^{f_c}$ and $y_0\in\CC$, $U$ is the unitary  mapping
$$
U: \begin{pmatrix}
y_1\\y_0\\y_{-1}
\end{pmatrix}\in \CC^{n} \mapsto \begin{pmatrix}
\frac{1}{\sqrt{2}}(y_1+y_{-1})\\ y_0 \\ \frac{1}{\mathrm{i}\sqrt{2}}(y_1-y_{-1})
\end{pmatrix} \in \CC^{n}.
$$
So, we can equivalently solve the \blasso~with $\Phi$ or with $\Psi$.


Note  that  elements in the image of $\Phi^*$ are  trigonometric polynomial of degree $f_c$, and can therefore have at most $f_c$ double roots, hence any discrete solution to \eqref{eq:blasso} is made of at most $f_c$ Diracs.

The key assumption in the previous section is that $\eta(x)\nabla^2 \eta_y(x)\prec 0$ for all $x\in \Ee_y$. This ensures continuity of the extended support and also ensures that $\Gamma_{\Ee_y}$ is full rank, and hence, allows for the use of the implicit function theorem in constructing a smooth path of solutions.    It is unclear that the set of $y$ for which this condition on the Hessian of $\eta_y$ fails is of measure zero.  However, in the case of  Fourier measurements in 1D, one can show (see Appendix \ref{app:fullrank}) that $\Gamma_{\Ee_y}$ is always of full rank and   the next proposition shows that the condition on the Hessian of $\eta_y$ can be relaxed.  If $\eta_y$ is not a constant function, then there exists some $\ell \in [n]$ such that the $\ell^{th}$ derivative of $\eta_y$ does not vanish.  Moreover, preservation of the vanishing derivatives as $y$ changes in a small neighbourhood is enough to guarantee continuity of the extended support. Given $y$, let $\eta_y$ be its dual certificate,  $\Ee_y$ be the extended support and $s^y \eqdef {\eta_y}_{\restriction_{\Ee_y}}$. Then, we have the following result.
\begin{prop}\label{prop:cont_ext}
Let $m\in\NN$.
Assume that  $y$ satisfies:
\begin{itemize}
\item[(i)] $\Ee_y = \{x_i\}_{i=1}^m$ is a discrete set,  and for each $i$, let $\ell_i\in \NN$ be such that $\forall \ell <\ell_i$, $\eta_y^{(2\ell)}(x_i)=0$ and $\eta_y^{(2 \ell_i)}(x_i)\neq 0$. 
\item[(ii)] there exists a neighbourhood $U$ around $y$ such that for all $y' \in U$,  there exists $m$ distinct points $\{x_i'\}_{i=1}^m \subseteq \Ee_{y'}$ such that $\forall \ell <\ell_i$, $\eta_{y'}^{(2\ell)}(x_i')=0$.
\end{itemize}
 Then, $|\Ee_{y'}| = m$, $s^{y'}  =  s^y$,   and $y'\in U \mapsto \Ee_{y'}$ is a continuous function.
\end{prop}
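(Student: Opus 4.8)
The plan is to reduce the statement to a counting argument for the zeros of the single nonnegative trigonometric polynomial $g_{y} \eqdef 1 - \eta_y^2$. First I would record the structural facts available in the Fourier setting: $\eta_{y} = \Phi^* p_y$ is a trigonometric polynomial of degree $f_c$, it satisfies $\norm{\eta_y}_\infty \le 1$ by dual feasibility, and, by the Lipschitz dependence of $p_{y}$ on $y$ established in the dual-problem subsection, $y' \mapsto \eta_{y'}$ is continuous into $C^\infty(\TT)$ (all derivatives converge uniformly, the degree being fixed). Consequently $g_{y'} = 1-\eta_{y'}^2 \ge 0$ is a trigonometric polynomial of degree at most $2f_c$ whose zero set on $\TT$ is exactly $\Ee_{y'}$, every zero having even multiplicity. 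The key translation is that, at a point $x$ where $\eta(x) = s \in \{-1,+1\}$, the nonnegativity of $h \eqdef 1 - s\eta$ near $x$ forces its contact order to be even and equal to the first nonvanishing order of $\eta - s$; hence, since $1 + s\eta = 2 \ne 0$ there, the multiplicity of $x$ as a zero of $g$ equals that order. I would use this to show that assumption (i) means precisely that each $x_i$ is a zero of $g_y$ of multiplicity exactly $2\ell_i$ (the vanishing of the even derivatives $\eta_y^{(2\ell)}(x_i)$ up to order $2\ell_i - 2$ automatically forces the intermediate odd derivatives to vanish, by the extremum argument, and $\eta_y^{(2\ell_i)}(x_i) \ne 0$ caps the order), and likewise that assumption (ii) forces each $x_i'$ to be a zero of $g_{y'}$ of multiplicity at least $2\ell_i$.

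Second, I would localize and establish an upper bound on the zero count near each $x_j$. Since $g_y$ is positive on the compact set $\TT \setminus \bigcup_j B(x_j,\delta)$, uniform convergence gives $g_{y'} > 0$ there for $y'$ in a small neighbourhood $U$, so $\Ee_{y'} \subseteq \bigcup_j B(x_j,\delta)$. Because $g_y$ has contact order exactly $2\ell_j$ at $x_j$, one has $g_y^{(2\ell_j)}(x_j) \ne 0$; shrinking $\delta$ and then $U$, $C^\infty$-continuity yields $g_{y'}^{(2\ell_j)} \ne 0$ throughout $B(x_j,\delta)$. By the generalized Rolle theorem this bounds the number of zeros of $g_{y'}$ in $B(x_j,\delta)$, counted with multiplicity, by $2\ell_j$; summing over $j$ gives at most $2L$ zeros in total, where $L \eqdef \sum_i \ell_i$.

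Third, I would combine the two bounds. Assumption (ii) produces $m$ distinct zeros $x_i'$ of $g_{y'}$ of multiplicities at least $2\ell_i$, hence at least $2L$ zeros with multiplicity. Matched against the upper bound $2L$, this forces equality everywhere: $\{x_i'\}$ is the entire zero set, each $x_i'$ has multiplicity exactly $2\ell_i$, so $\abs{\Ee_{y'}} = m$. The termwise bound ``multiplicity in $B(x_j,\delta) \le 2\ell_j$'' must then be saturated for every $j$, so each ball contains at least one $x_i'$; with $m$ points in $m$ disjoint balls, a pigeonhole argument gives exactly one per ball, i.e. a bijection $x_i' \leftrightarrow x_j$ with $x_i' \in B(x_j,\delta)$. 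This yields continuity of $y' \mapsto \Ee_{y'}$ (as $\delta \to 0$ when $y' \to y$), and since $\eta_{y'}(x_i') \to \eta_y(x_j) = s_j$ while $\abs{\eta_{y'}(x_i')} = 1$, the sign vector is locally constant, $s^{y'} = s^{y}$.

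I expect the main obstacle to be precisely the step that replaces the implicit function theorem used under the nondegeneracy assumption of Section~\ref{sec:divergence}: at a degenerate contact point ($\ell_i > 1$) one cannot invoke a transversality or inverse-function argument to track the support. The insight that makes the argument go through is to trade this local smoothness for the global algebraic rigidity of $g_{y'}$ --- a nonnegative trigonometric polynomial of bounded degree has a fixed budget of zero-multiplicity, and assumption (ii) pins down exactly how that budget is spent. The delicate points to get right are the evenness and contact-order bookkeeping (ensuring the prescribed even-derivative conditions genuinely control the full multiplicity), the order of quantifiers when choosing $\delta$ then $U$, and the pigeonhole that upgrades ``at least one zero per ball'' to ``exactly one'', which is what ultimately delivers both the count $\abs{\Ee_{y'}} = m$ and the Hausdorff continuity.
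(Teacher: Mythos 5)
Your proof is correct, and it takes a genuinely different route from the paper's. The paper works directly on $\eta_{y'}$: it orders the contact orders $\ell_1 \le \cdots \le \ell_m$, sets $b \eqdef \min_i \abs{\eta_y^{(2\ell_i)}(x_i)} > 0$, and uses uniform convergence of derivatives to obtain, on each ball $B_\epsilon(x_i)$, the lower bound $\abs{\eta_{y'}^{(2\ell_i)}} > b/2$ together with smallness of the lower-order even derivatives; then, processing the groups of equal $\ell_i$ from largest to smallest, a contradiction argument forces each prescribed point $x_i'$ into a ball of matching order, and a Taylor expansion at $x_i'$ (the $2\ell_i$-th derivative having constant sign on the ball) shows $\abs{\eta_{y'}} < 1$ on the punctured ball, so $x_i'$ is the unique extended-support point there. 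You instead pass to the nonnegative function $g_{y'} = 1 - \eta_{y'}^2$ and run a zero-counting argument: a generalized Rolle bound (at most $2\ell_j$ zeros with multiplicity per ball, since $g_{y'}^{(2\ell_j)} \neq 0$ there) is played against the lower bound extracted from assumption (ii) (each $x_i'$ carries multiplicity at least $2\ell_i$, via the same extremum argument for the odd derivatives that the paper uses implicitly when it asserts $\eta_{y'}^{(\ell)}(x_m')=0$ for $\ell \le 2\ell_m - 1$), and a pigeonhole closes the count. Both arguments rest on the same two analytic inputs --- $C^\infty$ convergence of $y' \mapsto \eta_{y'}$ (finite dimensionality of the space of trigonometric polynomials plus Lipschitz dependence of the dual projection) and the even-order contact forced by $\norm{\eta}_\infty \le 1$ --- but your counting scheme dispenses with the paper's ordering induction and Taylor uniqueness step, handles the matching of prescribed points to balls in one stroke, and as a bonus shows that the contact orders themselves are preserved ball-wise ($\ell$-values match under the bijection). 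Two cosmetic caveats: the ``fixed budget'' in your closing remark is really the Rolle budget $2L = 2\sum_i \ell_i$, not the degree bound $4f_c$ of $g_{y'}$ (polynomiality is never actually used beyond smoothness once $\Ee_y$ is assumed discrete); and, like the paper's own proof, your argument establishes the conclusion on a possibly smaller neighbourhood than the $U$ of assumption (ii), which is harmless since one is free to shrink $U$.
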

\begin{proof}
Without loss of generality, assume that $\ell_1\leq \ell_2\leq \cdots \leq \ell_m$. Let $b \eqdef  \min_i \abs{\eta^{(2 \ell_i)}(x_i)}>0$. Since letting $L = \max_{i=1}^m \ell_i$, $\max_{\ell\leq L} \norm{\eta_{y'}^{(\ell)} - \eta_y^{(\ell)}}_{L^\infty} \to 0$ as $y'\to y$, for all $\epsilon$, there exists $\delta$ such that for all $y'\in B_\delta(y)$,
\begin{enumerate}
\item[(i)] $ \Ee_{y'}\subseteq \enscond{[x-\epsilon,x+\epsilon]}{x\in \Ee_y}$,
\item[(ii)] for all $x\in B_\epsilon(x_i)$,  $\abs{\eta_{y'}^{(2\ell_i)}(x) }>b/2$  and  $\abs{\eta_{y'}^{(2\ell)}(x)} <b/8$  for all $\ell<\ell_i$.
\item[(iii)] $\abs{\eta_{y'}(x)} <1$ for all $x\not\in \cup_i B_\epsilon(x_i)$, 
\item[(iv)]  for all $x\in B_\epsilon(x_i)$ such that $\abs{\eta_{y'}(x)} = 1$, we have $\eta_{y'}(x) = s^y_i$.
\end{enumerate}
Suppose that $\ell_m = \ell_{m-1} = \cdots =\ell_{m-r}$. By assumption, $\eta_{y'}^{(2\ell)}(x_m') = 0$ for all $\ell<\ell_m$. Suppose that $x_{m}'\in B_\epsilon(x_i)$ for some $i<m-r$. Then, $\ell_i < \ell_m$ and $\eta_{y'}^{(2k_i)}(x_m') = 0$ by assumption. However, this contradicts (ii) above, since $\abs{\eta_{y'}^{(2 k_i)}(x_m') }>b/2$.
So, (up to a re-ordering of the points $\{x_{i}'\}_{i=m-r}^m$) we may assume that $x_m'\in B_\epsilon(x_m)$,  $\eta_{y'}(x_m')  = s_m$, $\eta_{y'}^{(\ell)}(x_m')  = 0$ for $\ell=1,\ldots, 2\ell_m - 1$ and $\abs{\eta_{y'}^{(2\ell_m)}(x)}>b/2$ for all $x\in B_\epsilon(x_m)$. Therefore, $\abs{\eta_{y'}(x)}<1$ for all $x\in B_\epsilon(x_m) \setminus \{x_m'\}$. 
By repeating this argument, it follows that for all $i = 1,\ldots, m$, $x_i'\in B_\epsilon(x_i)$ and  $\abs{\eta_{y'}(x)}<1$ for all $x\in B_\epsilon(x_i) \setminus \{x_i'\}$ and hence, $\abs{\Ee_{y'}} = m$.

\end{proof}

%
%
%

\begin{lem}\label{lem:fourier_inj}
Let $y\in \RR^n$.
Suppose that the extended support $\Ee_y$ is discrete, then $\mes_{\beta,\Ee_y}$ with
$$
\beta = \Phi_{\Ee_y}^\dagger(y-\lambda (\Phi_{\Ee_y}^*)^\dagger s_y), \qwhereq s_y \eqdef (\eta_y)_{\restriction_{\Ee_y}}
$$
is the unique solution to \eqref{eq:blasso}.
\end{lem}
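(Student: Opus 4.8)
The plan is to reduce the uniqueness claim to the injectivity of $\Phi_{\Ee_y}$, and then to establish that injectivity by combining a Vandermonde argument with a bound on $\abs{\Ee_y}$. First I would invoke Lemma~\ref{lem_sol}: since $\Ee_y$ is discrete, every solution of \eqref{eq:blasso} has the form $\mes_{\beta,\Ee_y}$ with $\beta = \Phi_{\Ee_y}^\dagger(y - \lambda(\Phi_{\Ee_y}^*)^\dagger s_y) + b$ for some $b\in\ker(\Phi_{\Ee_y})$, and the choice $b=0$ already yields a solution. The support set $\Ee_y$ is fixed (it is determined by the unique dual certificate $\eta_y$), and the measure $\mes_{\beta,\Ee_y}$ is determined by $\beta$, so two distinct values of $b$ produce two distinct solutions. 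Hence uniqueness of the solution is \emph{equivalent} to $\ker(\Phi_{\Ee_y}) = \{0\}$, in which case the displayed $\beta$ (corresponding to $b=0$) is the unique minimizer.

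It therefore remains to prove that $\Phi_{\Ee_y}$ is injective. Using \eqref{equiv_complex} I would write $\Phi_{\Ee_y} = U\Psi_{\Ee_y}$ with $U$ unitary, so that $\ker(\Phi_{\Ee_y}) = \ker(\Psi_{\Ee_y})$, reducing the problem to the complex Fourier matrix $\Psi_{\Ee_y}\in\CC^{n\times m}$ with $(\ell,j)$ entry $e^{-2\imath\pi \ell x_j}$, where $\Ee_y = \{x_j\}_{j=1}^m$ and $n = 2f_c+1$. Setting $z_j \eqdef e^{-2\imath\pi x_j}$ (these are distinct, since the $x_j$ are distinct in $\TT$) and rescaling column $j$ by the nonzero factor $z_j^{f_c}$, this matrix becomes the classical Vandermonde matrix $(z_j^{k})_{0\le k\le n-1,\,1\le j\le m}$. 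Selecting its top $m$ rows gives a square Vandermonde determinant $\prod_{i<j}(z_j - z_i)\neq 0$ whenever $m\le n$, so the $m$ columns are linearly independent and $\Psi_{\Ee_y}$ is injective provided $m\le n$.

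The remaining point, and the one requiring a genuine (if short) argument, is the cardinality bound $m = \abs{\Ee_y}\le n$. I would obtain it from the dual certificate: $\eta_y$ lies in the image of $\Phi^*$, hence is a real trigonometric polynomial of degree at most $f_c$, and since $\Ee_y$ is discrete it is not identically $\pm 1$. Consider $g \eqdef 1 - \eta_y^2$, a nonzero real trigonometric polynomial of degree at most $2f_c$ with $g\ge 0$. At each $x\in\Ee_y$ one has $g(x)=0$, and since this is a global minimum, $g'(x)=0$ as well, so $x$ is a zero of $g$ of multiplicity at least two. As a nonzero trigonometric polynomial of degree $D$ has at most $2D$ zeros in a period counted with multiplicity, we get $2m \le 2\cdot(2f_c) = 4f_c$, i.e. $m \le 2f_c < 2f_c+1 = n$, which is exactly the inequality needed in the Vandermonde step.

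I expect the cardinality bound to be the only delicate ingredient: the Vandermonde injectivity is standard, whereas controlling $\abs{\Ee_y}$ rests on the observation that extended-support points are forced to be double roots of $1-\eta_y^2$. It is worth noting that $m\le 2f_c$ is strictly larger than the $f_c$-Dirac bound valid for the \emph{support} of a solution, so injectivity genuinely exploits the fact that $2f_c$ is still below $n=2f_c+1$.
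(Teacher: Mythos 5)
Your proof follows essentially the same route as the paper's: reduce uniqueness to injectivity of $\Phi_{\Ee_y}$ via Lemma~\ref{lem_sol}, then deduce injectivity from a Vandermonde argument once a cardinality bound $\abs{\Ee_y}\le n$ is available. Your bound $m\le 2f_c$ (each point of $\Ee_y$ being a double root of the nonzero trigonometric polynomial $1-\eta_y^2$ of degree $2f_c$) is in fact more carefully justified than the paper's asserted $m\le f_c$, and it is all the Vandermonde step needs since $2f_c < n = 2f_c+1$.
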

\begin{proof}
Since $\Ee_y = \ens{x_j}_{j=1}^m$ is discrete, it is of cardinality  at most $m\leq f_c$, moreover, $\Phi_{\Ee_y}$ is an injective matrix since by considering $\Psi$ from \eqref{equiv_complex}, $\Psi_{\Ee_y}$ is the matrix with $m$ columns of the form $(e^{-2\mathrm{i}\pi \ell x_j})_{\ell=-f_c}^{f_c}$. This is a Vandermonde matrix of size $n\times n$ restricted to $m\leq f_c<n$ columns, and is therefore both $\Phi_{\Ee_y}$ and $\Psi_{\Ee_y}$ are injective. Finally, since $\mathrm{ker}(\Phi_{\Ee_y}) = \ens{0}$, the formula for $\beta$ follows by Lemma \ref{eq:lassosol}.  
\end{proof}

With Proposition \ref{prop:cont_ext} and Lemma \ref{lem:fourier_inj} in mind, we now modify the set $\Hh$ from \eqref{eq:setH-general} such that it is the boundary of sets for which the number of vanishing derivatives at each point of the extended support remains constant.
Define the set
\begin{align*}
\Hh \eqdef \bigcup_{m=1}^{f_c} \bigcup \enscond{
\mathrm{Bd} \Bigg( \Pi_Y \Qq_{(m,\sigma, I, \bell)} \Bigg)  
}{\sigma= \{-1,1\}^m, I\subseteq [m], 
\bell \in\NN^m, \sum_{i=1}^m \bell_i\leq n 
}
\end{align*}
where
\begin{equation}
\begin{split}
\Qq_{(m,\sigma, I, \bell)} \eqdef  \Bigg\{ (y, \Ee,\beta)&\in \RR^n \times\TT^m\times \RR^m ;\eta\eqdef \Phi^*( y - \Phi_\Ee \beta) ,  M\eqdef \Phi_{\Ee}, \Ee\eqdef \{x_i\}_{i=1}^m\\
&\eta \in \partial \abs{\mes_{\beta,\Ee}}_{TV},  \mathrm{Rank}(M) =m\\
& 
\pa{ M^\dagger (y - (M^*)^\dagger \lambda \sigma )   }_I = 0_I  \\
& \eta^{(2j)}(x_i) = 0, \; j \in[ \bell_i-1], i\in [ m] \Bigg\}.
\end{split}
\end{equation}
We also define for $i\in\{+1,-1\}$, the following sets
\begin{align*}
\Gg^+ \eqdef \mathrm{Bd}\pa{\Pi_Y\enscond{(y,\beta,\Ee)\in \RR^n \times \RR_{\geq 0}^n\times \TT^n}{ y - \Phi_\Ee \beta  =  \lambda \delta_1 }}
\\
\Gg^- \eqdef \mathrm{Bd}\pa{\Pi_Y\enscond{(y,\beta,\Ee)\in \RR^n \times \RR_{\leq 0}^n\times \TT^n}{ y- \Phi_\Ee \beta  = - \lambda \delta_1 }}
\end{align*}
where $\delta_1$ be the vector of length $n$  with first entry equal to one, and all other entries equal to zero.

The set $\Kk \eqdef \Hh\cup\Gg^{+}\cup \Gg^{-}$ can be shown to be a set of measure zero (See Proposition \ref{prop:K_measurezero}).  The following theorem is the main result of this section and  shows that the divergence can be computed for all $y\not\in \Kk$.

\begin{thm}\label{thm:main2}
For all $y\not\in \Kk$, 
\begin{align*}
\dive(\hat \mu)(y) = \begin{cases}
0 & \qifq \Ee_y = \emptyset, \\
2f_c+1 & \qifq  \Ee_y = \TT, \\
2k - \nu & \qifq \Ee_y \text{ is a discrete set.}
\end{cases}
\end{align*}
where in the case of $\Ee_y$ is discrete, $\nu = -\lambda \sum_{j=1}^k \frac{1}{\beta_j}\eta_y''(x_j) v_j \geq 0$,  where $\Aa\eqdef \{x_j\}_{j=1}^k$ and $\beta$ having all non-zero entries is such that $\mes_{\beta,\Aa}$ is the solution to \eqref{eq:blasso},
and $
v_j = (M^{-1})_{s+j}
$ where $M$ is as in \eqref{eq:M} with $\Xx = \Aa$.
\end{thm}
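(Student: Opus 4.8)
\textbf{Plan of proof for Theorem~\ref{thm:main2}.}
The plan is to treat the three cases of the extended support separately, reducing each to an application of the divergence formula \eqref{eq:dof} established in Theorem~\ref{prop:div_formula}, whose hypotheses I must verify hold for every $y \notin \Kk$. The two extreme cases are warm-ups. When $\Ee_y = \emptyset$, the dual certificate satisfies $\norm{\eta_y}_\infty < 1$ strictly; by the same argument as in Proposition~\ref{prop:cont_ext}, this strict inequality persists in a neighbourhood $U$ of $y$, so $\hat\mu(y') = 0$ for all $y' \in U$ (the zero measure is the unique solution) and hence $\dive(\hat\mu)(y) = 0$. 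When $\Ee_y = \TT$, the certificate is constant, $\eta_y \equiv \pm 1$, which forces $y/\lambda$ to lie in the affine direction picked out by $\Gg^\pm$; the role of excluding $\Gg^+ \cup \Gg^-$ is precisely to control this degenerate stratum, and here $\hat\mu$ acts as an orthogonal projector onto an $n$-dimensional space, giving $\dive(\hat\mu)(y) = n = 2f_c+1$.

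The substantive case is $\Ee_y$ discrete. First I would invoke Lemma~\ref{lem:fourier_inj}: in the Fourier setting $\Phi_{\Ee_y}$ is automatically injective (Vandermonde), so the solution $\mes_{\beta,\Aa}$ with $\Aa = \{x_j\}_{j=1}^k$ is \emph{unique} and $\beta$ has all non-zero entries. Next I would establish the smooth solution path: combining Proposition~\ref{prop:cont_ext} (continuity of the extended support, now relying only on preservation of the vanishing-derivative pattern rather than on the strict Hessian condition (A1)) with the fact that $y \notin \Hh$ puts $y$ in the interior of the relevant $\Pi_Y(\Qq_{(m,\sigma,I,\bell)})$, I can reproduce the implicit-function-theorem argument of Theorem~\ref{thm:cont_ext_supp1} to conclude that $y' \mapsto (\beta_{y'}, \Aa_{y'})$ is $\Cc^1$ near $y$. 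This gives exactly the hypotheses of Theorem~\ref{prop:div_formula}, so the formula $\dive(\hat\mu)(y) = \tr(\Gamma_{\Aa} M^{-1} \Gamma_{\Aa}^*)$ applies with $M$ as in \eqref{eq:M}.

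It then remains to evaluate this trace in closed form. Since $d=1$ and $\Phi_\Aa = \Phi_{\Ee_y}$ is injective, I am in the injective case of the Corollary, so
$$
\dive(\hat\mu)(y) = \rank(\Gamma_\Aa) - \tr\!\pa{\begin{pmatrix} 0 & 0 \\ 0 & \diag(Z)D_\beta^{-1} \end{pmatrix} M^{-1}} = 2k - \tr\!\pa{\diag(Z_j/\beta_j)\,(M^{-1})_{[k+1,2k]}},
$$
where $\Gamma_\Aa$ has full rank $2k$ by Appendix~\ref{app:fullrank} and $Z_j = -\lambda\eta_y''(x_j)$. Expanding the trace block-diagonally and recalling $Q_j = -\tfrac{\lambda}{\beta_j}\eta_y''(x_j)$, the correction term is $\sum_{j=1}^k \tfrac{Z_j}{\beta_j}(M^{-1})_{k+j,k+j} = -\lambda\sum_{j=1}^k \tfrac{1}{\beta_j}\eta_y''(x_j)\,v_j$ with $v_j = (M^{-1})_{k+j,k+j}$, which is exactly the claimed $\nu$ (here $d=1$ so $s=k$ and the second block has size $k$). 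Nonnegativity $\nu \geq 0$ follows from the positive-semidefinite computation already carried out in the proof of the Corollary, since each $Q_j \succeq 0$. The main obstacle is the verification step in the discrete case: checking that excluding $\Kk$ genuinely suffices to place $y$ in the interior of the stratum and to rule out the boundary behaviour of both the discrete-support cells $\Qq_{(m,\sigma,I,\bell)}$ and the saturated cells $\Gg^\pm$, so that the implicit function theorem of Theorem~\ref{thm:cont_ext_supp1} can be run without the strict Hessian assumption (A1)—this is where the Fourier-specific full-rank result of Appendix~\ref{app:fullrank} and Proposition~\ref{prop:cont_ext} do the essential work in replacing (A1).
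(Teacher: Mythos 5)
Your proposal follows essentially the same route as the paper's own proof: the same three-case split (with the empty and saturated cases handled by local persistence of the certificate structure, the latter using exclusion of $\Gg^\pm$ so that $\hat\mu(y') = y' \mp \lambda\delta_1$ locally), and in the discrete case the same chain of ingredients --- Lemma~\ref{lem:fourier_inj} for injectivity/uniqueness, $y\notin\Hh$ placing $y$ in the interior of $\Pi_Y(\Qq_{(m,\sigma,I,\bell)})$, Proposition~\ref{prop:cont_ext} and the full-rank result of Appendix~\ref{app:fullrank} replacing (A1), the implicit function theorem as in Theorem~\ref{thm:cont_ext_supp1}, and finally Theorem~\ref{prop:div_formula} with the Corollary's trace computation yielding $2k-\nu$. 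The only slips are cosmetic (writing $\Phi_\Aa = \Phi_{\Ee_y}$ when $\Aa$ may be a proper subset of $\Ee_y$, though injectivity still passes to the submatrix), so the plan is sound and matches the paper.
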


\begin{rem}
We defer the proof of Theorem \ref{thm:main2} to Appendix \ref{app-proof-thm-main}, since its proof is similar to that of Theorem~\ref{thm:cont_ext_supp1}. We however mention two key
 properties of the Fourier setting which allow us to relax the assumptions in this result:
\begin{enumerate}
\item The fact that any element of $\Im(\Phi^*)$ is either constant or has finitely many roots, each of which has finite multiplicity. This ensures  that the extended support moves in a continuous manner.
\item $\Gamma_\Aa^*\Gamma_\Aa$ is invertible, which ensures that we can invoke the implicit function theorem to conclude that the path $y\mapsto (\beta, \Aa)$ is $\Cc^1$.
\end{enumerate}
\end{rem}

To conclude this section, we prove that the set $\Kk$ is of zero measure. This result follows by simple modificatons of the proof of Proposition \ref{prop:definable}, since in the case of sampling Fourier coefficients, $x\mapsto \phi(x)$ is semi-algebraic and hence definable. For completeness, we present a proof using directly results from semi-algebraic geometry.

\begin{prop}\label{prop:K_measurezero}
The set $\Kk$ is of zero measure.
\end{prop}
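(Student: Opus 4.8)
The plan is to show that each of the finitely or countably many pieces making up $\Kk$ is the boundary of a \emph{semi-algebraic} subset of $\RR^n$, and then to invoke the fact that the boundary of a semi-algebraic set has Lebesgue measure zero, so that a countable union of such boundaries is again null. The decomposition of $\Kk$ is already finite/countable: the union defining $\Hh$ ranges over $m\in\{1,\dots,f_c\}$, $\sigma\in\{-1,1\}^m$, $I\subseteq[m]$, and $\bell\in\NN^m$ with $\sum_i\bell_i\le n$, which is a finite index set, and $\Gg^+,\Gg^-$ contribute two more pieces. Hence it suffices to treat one set $\Qq_{(m,\sigma,I,\bell)}$ (and the analogous $\Gg^\pm$) at a time.

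First I would exhibit $\Qq_{(m,\sigma,I,\bell)}$ as a semi-algebraic subset of $\RR^n\times\TT^m\times\RR^m$. The key observation, as flagged in the remark, is that in the Fourier setting $\phi$ is semi-algebraic: writing $c_\ell\eqdef\cos(2\pi\ell x)$, $s_\ell\eqdef\sin(2\pi\ell x)$, these satisfy polynomial (Chebyshev) relations in the single variables $\cos(2\pi x),\sin(2\pi x)$ subject to $\cos^2+\sin^2=1$, so every coordinate of $\phi(x)$, $\phi'(x)$, and all higher derivatives $\eta^{(2j)}(x_i)$ is a polynomial in the $(\cos(2\pi x_i),\sin(2\pi x_i))$. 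Consequently each defining condition is (semi-)algebraic: the rank condition $\mathrm{Rank}(M)=m$ is expressible by vanishing/non-vanishing of minors of $M=\Phi_\Ee$; the pseudo-inverse expressions $M^\dagger,(M^*)^\dagger$ are, on the locus $\mathrm{Rank}(M)=m$, rational (hence semi-algebraic) functions of the entries since $M^\dagger=(M^*M)^{-1}M^*$ there, so the equation $\bigl(M^\dagger(y-(M^*)^\dagger\lambda\sigma)\bigr)_I=0_I$ is a system of polynomial equations after clearing denominators; the subdifferential membership $\eta\in\partial\abs{\mes_{\beta,\Ee}}_{TV}$ unwinds (using the characterisation recalled after~\eqref{eq-defn-dual-certif}) into the interpolation identities $\eta(x_i)=\sigma_i$ together with the sign constraints $\sigma_i\beta_i\ge0$ and finitely many inequalities $-1\le\eta(x)\le1$; and the vanishing-derivative conditions $\eta^{(2j)}(x_i)=0$ are polynomial. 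The constraint $x\in\TT$ I would encode through the pair $(\cos(2\pi x),\sin(2\pi x))$ on the algebraic circle, so that the whole object lives in a semi-algebraic subset of an affine space. Thus $\Qq_{(m,\sigma,I,\bell)}$ is semi-algebraic.

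Next I would use that semi-algebraicity is preserved under the two operations applied: the projection $\Pi_Y$ onto the $y$-coordinates is a projection between affine spaces, and by the Tarski--Seidenberg theorem the image $\Pi_Y(\Qq_{(m,\sigma,I,\bell)})$ is a semi-algebraic subset of $\RR^n$; and the topological boundary $\mathrm{Bd}$ of a semi-algebraic set is again semi-algebraic. The sets $\Gg^\pm$ are handled identically — the equation $y-\Phi_\Ee\beta=\pm\lambda\delta_1$ together with the sign constraint $\beta\in\RR^n_{\ge0}$ (resp. $\RR^n_{\le0}$) is plainly semi-algebraic, its $\Pi_Y$-image is semi-algebraic, and so is its boundary. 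Finally I would invoke the standard dimension bound from semi-algebraic geometry: the boundary $\mathrm{Bd}(A)$ of a semi-algebraic set $A\subseteq\RR^n$ has dimension strictly less than $n$, and a semi-algebraic set of dimension $<n$ has $n$-dimensional Lebesgue measure zero. Since $\Kk$ is a \emph{finite} union of such null boundaries, $\Kk$ itself has measure zero.

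The main obstacle I expect is the bookkeeping in the first step, namely verifying rigorously that the pseudo-inverse and subdifferential conditions really do cut out a semi-algebraic set rather than merely a definable one: one must be careful that $M^\dagger$ is only rational on the fixed-rank stratum (which is why the clause $\mathrm{Rank}(M)=m$ must be imposed before writing $M^\dagger=(M^*M)^{-1}M^*$), and that the circle-parametrisation $x\mapsto(\cos2\pi x,\sin2\pi x)$ together with the Chebyshev relations correctly renders all the trigonometric derivatives polynomial. Once these encodings are in place, the measure-zero conclusion is a routine application of Tarski--Seidenberg and the dimension theory of semi-algebraic sets, mirroring the definable-case argument of Proposition~\ref{prop:definable}.
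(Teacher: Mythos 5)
Your proposal is correct and follows essentially the same route as the paper: show that each $\Qq_{(m,\sigma,I,\bell)}$ (and $\Gg^{\pm}$) is semi-algebraic, apply Tarski--Seidenberg to the projection $\Pi_Y$, and conclude via the dimension bound $\dim(\mathrm{Bd}(A))<\dim(A)\leq n$ for semi-algebraic $A\subseteq\RR^n$. The only imprecision is that the constraint $\norm{\eta}_\infty\leq 1$ is a universally quantified family of inequalities over $x\in\TT$, not finitely many, but it is still semi-algebraic by quantifier elimination, which is exactly how the paper treats it.
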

\begin{proof}
To prove that $\Hh$ is of zero measure, it is sufficient to show that $\dim(\mathrm{Bd}\pa{\Pi_Y(\Qq_{(m,\sigma,I,\bell)})}<n$, since the countable union of zero measure sets is of zero measure.
To prove this, it is sufficient to show that $\Qq_{(m,\sigma,I,\bell)}$ is a semi-algebraic set. Then, by the  Tarski-Seidenberg principle \cite[Thm. 2.3]{coste2000introduction}, $\Pi_Y (\Qq_{(m,\sigma,I,\bell)})$ is a semi-algebraic set and is of dimension at most $n$. Finally, by Theorem 3.22 in \cite{coste1999introduction}, we have $$\dim(\mathrm{Bd}\pa{\Pi_Y(\Qq_{(m,\sigma,I,\bell)})} < \dim\pa{\Pi_Y(\Qq_{(m,\sigma,I,\bell)})} \leq n.
$$
To see that $\Qq_{(m,\sigma,I,\bell)} $ is a semi-algebraic set \footnote{A semi-algebraic set in $\RR^n$ is a set of vectors in $\RR^n$ satisfying a boolean combination of polynomial equations.  Moreover, sine and cosine are semi-algebraic functions. 
}, note that
\begin{align*}
\Qq_{(m,\sigma,I,\bell)} =& A \cap B
\end{align*}
where
\begin{align*}
A\eqdef \enscond{(y,\Ee,\be)}{ \substack{\eta =\frac{1}{\la} \Phi^*(y-\Phi_\Ee \be), \norm{\eta}_\infty\leq 1, \eta_{\restriction \Ee} = \sign(\be), \\
\eta^{(2j)}(x_i)= 0, j\in [\bell_i-1], i\in [m]}}
\end{align*}
and 
$$
	B\eqdef  \enscond{(y,\Ee,\be)}{M = \Phi_\Ee, \; \pa{ M^\dagger (y - (M^*)^\dagger \lambda \sigma )   }_I = 0_I }
$$
Note that $M \mapsto M^*$, $M\mapsto M^\dagger$ and $M\mapsto M_S$ are semi-algebraic mappings and the composition of semi-algebraic mappings is semi-algebraic. Also, $\Ee \to \Phi_\Ee$ is semi-algebraic. Therefore, $B$ is a semi-algebraic set.

For the set $A$, $f_x : (y,\Ee,\be) \mapsto (\Phi^* (y- \Phi_\Ee \be))(x)$ is a semi-algebraic mapping. The constraint $\norm{\eta}_\infty\leq 1$ is
$$
\forall x\quad  -1 \leq f_x(y,\Ee,\be) \leq 1 \iff \neg\ens{ \exists x \; \pa{f_x(y,\Ee,\be) < - 1 \text{ or }   f_x(y,\Ee,\be) >1} }.
$$
which is a semi-algebraic constraint \cite[page 28]{coste2000introduction}. Finally, the derivatives of semi-algebraic mappings are semi-algebraic \cite[Ex 2.10]{coste2000introduction}  and $f(\be) = \sign(\be)$ is also semi-algebraic.

Finally, again by the Tarski-Seidenberg principle, ${\Pi_Y\enscond{(y,a,\Ee)\in \CC^n \times \RR_{\pm}^n\times \TT^n}{ y - \Phi_\Ee a  =\pm  \lambda \delta_1 }}$ are semi-algebraic sets, and their boundary is of measure strictly smaller than $n$, so both $\Gg^+$ and $\Gg^-$ are of zero measure.

\end{proof}

\section{Remarks on positivity constraint}


The results of the previous sections can be extended to other sparsity-enforcing convex optimization problems over the space of measure. We present here the extension to the following regression problem under positivity constraints:
\begin{equation}\label{eq:pos_constr1}
\inf_{\mes\in\Mm(\Om)} \frac{1}{2}\norm{\Phi \mes - y}_2^2 \text{ subject to }  \mes\geq 0 \tag{$\Pp_+(y)$}
\end{equation}

We have the following properties for its Legendre Fenchel dual:
\begin{lem}\label{lem_pos1}
The Legendre-Fenchel dual of \eqref{eq:pos_constr1} reads
\begin{equation}\label{eq:dual_pos_constr1}
\sup_{\Phi^* p \geq 0} -\frac{1}{2}\norm{p+y}_2^2 +\frac12 \norm{y}^2.  \tag{$\Dd_+(y)$}
\end{equation}
Moreover,
\begin{itemize}
\item[(i)] strong duality holds with $\inf \eqref{eq:pos_constr1} = \sup \eqref{eq:dual_pos_constr1}$.
\item[(ii)]  If $\mes_y$ and $p_y$ are respectively  solutions to \eqref{eq:pos_constr1} and \eqref{eq:dual_pos_constr1}, then letting $p_y = \Phi \mes_y - y$, $\mathrm{Supp}(\mes_y) \subseteq \enscond{x\in\Om}{\Phi^* p_y(x) = 0}$.
\item[(iii)]  If $\mes_y= \sum_{j=1}^k \beta_j \delta_{x_j}$ with $\beta_j\geq 0$  and $p_y$ are respectively  solutions to \eqref{eq:pos_constr1} and \eqref{eq:dual_pos_constr1} if and only if
$$
\Phi \mes_y = p_y+y, \quad \Phi^* p_y \geq 0 \qandq  (\Phi^* p_y)(x_j) = 0.
$$
\end{itemize}
\end{lem}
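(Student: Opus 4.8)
The plan is to derive the dual problem \eqref{eq:dual_pos_constr1} by Fenchel-Rockafellar duality, and then read off properties (i)--(iii) from the resulting optimality (KKT) conditions. The primal problem \eqref{eq:pos_constr1} can be written as $\inf_\mes G(\Phi \mes) + H(\mes)$, where $G(z) = \frac12\|z - y\|_2^2$ is the data fidelity and $H(\mes) = \iota_{\{\mes \geq 0\}}(\mes)$ is the convex indicator of the positivity cone in $\Mm(\Om)$. The standard Fenchel-Rockafellar formula gives the dual $\sup_p -G^*(p) - H^*(-\Phi^* p)$.

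\textbf{Computing the conjugates.} First I would compute $G^*(p) = \frac12\|p\|_2^2 + \langle p, y\rangle$ by the usual conjugate of a shifted quadratic (equivalently $\frac12\|p+y\|^2 - \frac12\|y\|^2$, which is the form appearing in \eqref{eq:dual_pos_constr1}). Next, the conjugate of the indicator of a cone is the indicator of (the negative of) its polar cone: since $H$ is the indicator of the cone of nonnegative measures, $H^*(-\Phi^* p)$ is finite (and zero) exactly when $-\Phi^* p$ lies in the polar cone, i.e. when $\langle -\Phi^* p, \mes\rangle \leq 0$ for all $\mes \geq 0$, which is precisely the pointwise constraint $\Phi^* p \geq 0$ on $\Om$. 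Substituting these two conjugates into the Fenchel-Rockafellar formula yields \eqref{eq:dual_pos_constr1} after the sign-normalization $p \mapsto \Phi\mes_y - y$ fixed in the statement.

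\textbf{Strong duality and the support condition.} For (i), strong duality follows from a qualification condition: $G$ is finite and continuous at a point in the range of $\Phi$ (e.g. at $z=0$, taking $\mes=0$), which suffices for Fenchel-Rockafellar with no duality gap. For (ii) and (iii), I would write the KKT conditions attached to the saddle point. Complementary slackness for the conic constraint reads $\langle \Phi^* p_y, \mes_y\rangle = 0$; combined with $\Phi^* p_y \geq 0$ and $\mes_y \geq 0$, this forces $\mathrm{Supp}(\mes_y) \subseteq \{x : \Phi^* p_y(x) = 0\}$, giving (ii). The primal-dual stationarity relation $p_y = \Phi\mes_y - y$ comes from differentiating $G$. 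For (iii), when $\mes_y = \sum_j \beta_j \delta_{x_j}$ with $\beta_j \geq 0$, the complementary slackness $\langle \Phi^* p_y, \mes_y\rangle = \sum_j \beta_j (\Phi^* p_y)(x_j) = 0$ together with nonnegativity of each term is equivalent to $(\Phi^* p_y)(x_j) = 0$ for every $j$ with $\beta_j > 0$; the converse direction checks that these three conditions (stationarity, dual feasibility, complementary slackness) are jointly sufficient for optimality of a convex problem.

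\textbf{Main obstacle.} The delicate point is the conjugate computation of $H^*$ in the infinite-dimensional, non-reflexive setting of $\Mm(\Om)$: the pairing is between measures and continuous functions $\Cc(\Om)$, so I must verify that the polar-cone constraint genuinely translates to the \emph{pointwise} inequality $\Phi^* p \geq 0$ on $\Om$ (using that $\Phi^* p \in \Cc(\Om)$ by continuity of $\phi$, and that nonnegativity against all $\mes \geq 0$ is equivalent to pointwise nonnegativity of a continuous function). Care is also needed to justify strong duality without attainment issues on the primal side; the continuity of $G$ at an interior point of $\mathrm{dom}\, G \circ \Phi$ is what rescues this and ensures the dual supremum is attained and the gap vanishes.
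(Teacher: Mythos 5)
Your proposal is correct and follows essentially the same route as the paper: Fenchel--Rockafellar duality with the same splitting into the quadratic fidelity term and the indicator of the nonnegativity cone, the same conjugate computations, the same qualification at $\mes\equiv 0$ for strong duality, and optimality conditions that are just the subdifferential inclusions $\Phi \mes_* \in \partial E^*(p_*)$, $-\Phi^* p_* \in \partial F(\mes_*)$ phrased as KKT/complementary slackness. No meaningful difference from the paper's argument.
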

The proof of this lemma can be found in Appendix \ref{app:pos_lem}.


For \eqref{eq:pos_constr1}, we see that assuming that there is a neighbourhood $U$ of $y$ such that the solution consists of $k$ spikes whose positions and amplitudes follow a differentiable path, then for $y\in U$, the solutions $\mes_y \eqdef \mes_{(\be_y,\Xx_y)}$ satisfy
$$
\Phi_{\Xx_y}^* \Phi_{\Xx_y} \be = \Phi_{\Xx_y}^* y \qandq (\Phi_{\Xx_y}^{(1)})^* \Phi_{\Xx_y}\be = (\Phi_{\Xx_y}^{(1)})^* y
$$
and differentiating this leads to the same expression for the divergence of $\hat \mu(y) = \Phi \mes_y$. It is also straightforward to extend the results of Sections \ref{sec:divergence} and  \ref{sec:fourier} for the problem \eqref{eq:pos_constr1} (we simply replace the sign vector $s_y$ with the zero vector).


\section{Numerical Experiments}\label{sec:numerics}


Although it is not the purpose of this paper, let us mention some works on  devising efficient numerical scheme to solve exactly or approximately the infinite-dimensional optimization problem~\eqref{eq:blasso}. For Fourier measurements, it is possible to use method from polynomial optimization and sum-of-squares semi-definite programming relaxation~\cite{candes-towards2013,azais-spike2014,de2016exact}. For the more general problem, one can use greedy-type methods, which are extensions of the celebrated Frank-Wolfe method~\cite{bredies-inverse2013,boyd2017alternating,denoyelle2019sliding}, see also~\cite{chizat2018global} for a non-convex solver with global convergence guarantees.

In the following, we  numerical experiments  to validate our theoretical findings.   The experiments are computed using the sliding Frank-Wolfe method introduced in \cite{denoyelle2019sliding}.

\subsection{Stein's unbiased risk estimate (SURE)}
\label{sec-sure}

Given samples $y\sim \Nn(\mu,\sigma^2 \Id_d)$, let $\hat \mu : \RR^n \to \RR^n$ be an estimate of $\mu$ from $y$. Then,  a quick computation reveals that the risk can be expressed as
\begin{align*}
	R &\eqdef \EE\norm{\hat \mu - \mu}^2 = \EE \norm{\mu - y + y - \hat \mu}\\
	&= - n\sigma^2 + \EE\norm{y - \hat \mu}^2 + 2 \sum_{i=1}^n \mathrm{Cov}(y_i, \hat \mu_i).
\end{align*}
So, Stein's lemma \cite{stein1981estimation} gives an explicit estimate of the risk  in the case where the estimator $\hat \mu$ is  almost differentiable, that is,
$$
	\sure(\hat \mu(y)) \eqdef -n \sigma^2 + \norm{y - \hat \mu}^2 +  2\sigma^2 \dive(\hat \mu)(y).
$$ This estimate is referred to as the Stein's unbiased risk estimate. 

Our main result presents a closed form expression for $\dive(\hat \mu)(y)$ and shows this to be at most the number of recovered parameters (and smaller given conditions on the curvature of the dual certificate). In this section, we demonstrate the importance of our result by comparing the SURE against the estimate obtained if $\dive(\hat \mu)(y)$ was taken to be the number of recovered parameters
$$
	\sure_{\mathrm{param}}(\hat \mu(y)) \eqdef -n \sigma^2 + \norm{y - \hat \mu}^2 +  2\sigma^2 P,
$$
where $P = k(d+1)$ where $k$ is the number of Diracs in the solution  $\mes_{\beta,\Xx}$ to~\eqref{eq:blasso} of smallest support.


For some fixed $\mu$, we generate $K=200$ instances  $(y_i)_{i=1}^K$ in accordance to the Gaussian distribution  with mean $\mu$ and standard deviation $\sigma$. That is, $y_i \sim \Nn(\mu, \sigma^2 \Id)$ for $i\in [K]$. Then, given some $\lambda>0$, for each $y_i$, we solve \eqref{eq:blasso} using the sliding Frank-Wolfe algorithm~\cite{denoyelle2019sliding} to obtain $\mes_{\beta_{y_i},\Xx_{y_i}}$. 
Note that under some non-degeneracy condition, it has been proved that this algorithm converges in a finite number of steps (thus computing a discrete sparse solution) and it can be thus used a efficient scheme to have access to $\mes_{\beta_{y_i},\Xx_{y_i}}$ (since one can check a posteriori that $\Gamma_\Xx$ is injective and thus the solution is the unique one). 
Let $\hat \mu_\lambda(y_i)  \eqdef  \Phi \mes_{\beta_{y_i},\Xx_{y_i}}$. 
We then compute the SURE using $\dive(\hat \mu)(y_i)$ as derived in Theorem \ref{thm:main2}:
$$
	\sure(\hat \mu_\la(y_i)) = -n \sigma^2 + \norm{y_i - \hat \mu_i}^2 +  2\sigma^2 \dive(\hat \mu)(y_i)
$$
and the SURE   where   $\dive(\hat \mu)(y_i)$ is replace with the number of recovered parameters $P_i = k_i (d+1)$ where $k_i$ is the length of $\beta_{y_i}$:
$$
	\sure_{\mathrm{param}}(\hat \mu_\la(y_i))  \eqdef -n \sigma^2 + \norm{y_i - \hat \mu_i}^2 +  2\sigma^2 P_i.
$$

We carry out this numerical experiment for the two cases already mentioned in Section~\ref{sec-curse}
\begin{enumerate}
\item The sampling of Fourier coefficients in dimension 1, where $$\Phi \mes= \frac{1}{\sqrt{2f_c+1}} \pa{ \int e^{-2 \imath\pi k t} \mathrm{d}\mes(t)}_{\abs{k}\leq f_c, k\in\ZZ}$$ with $f_c = 10$ and set $\sigma= 0.01$. We also let $\mu = \Phi \mes_{\beta,\Xx}$ be generated by 3 spikes, with $\beta = [2,-4.5,4]$ and $\Xx = [0.1,0.6,0.9]$.
For convenience of implementation, we use the complex exponential formulation, which is equivalent to the sine and cosine formulation as mentioned in \eqref{equiv_complex}.

\item Learning a two-layer neural network. Given data $(a_j,y_{0,j})$ for $j=1,\ldots,n$ with $a_j\in \RR^d$ and $y_{0,j}\in \RR$, we use a normalized version of the parameterization explained in Section~\ref{sec-curse}, namely
\eq{
	\phi(x) = \frac{\tilde\phi(x)}{\norm{\tilde\phi(x)}} 
	\qwhereq
	\tilde\phi(x) = ( \xi( \dotp{x}{a_j} ) )_{j=1}^n \in \RR^n
}
and where $\xi(r) = \max(r,0)$ is the ReLu non-linearity.
Note that if $\mes= \sum_i \be_i \delta_{x_i}$, then
$$
	\Phi \mes = \pa{ \sum_i  \frac{\be_i}{ \norm{\tilde\phi(x_i)} }  \max\pa{0,\dotp{a_j}{x_i}} }_{j=1}^n
$$
so $(x_i)_i$ and $(\be_i)_i$ respectively represent the parameters ($n$ neurons) of the hidden and output layers of the trained neural network. 
The formulation of a two-layer neural network using sparse measure was introduced in \cite{bach2017breaking}, see also~\cite{chizat2018global}. 
In our experiment, we choose $\sigma= 0.05$, $d=50$ and $n=500$, and $a_i \overset{iid}{\sim} \Nn(0,\Id_d)$. We also fix $\mu \eqdef \Phi \mes_{\beta,\Xx}$, with $\beta_j\in \RR^3$ and $\Xx\in \RR^{3d}$ where $\beta_j \in \Nn(0, 1)$ and $ \Xx_j\in \Nn(0,10^3 \Id_d)$, 
\end{enumerate}

Figure \ref{fig:sure-fourier} and \ref{fig:sure-nn} show plots of the average SURE values for different values of $\lambda$ and the mean squared error:
\begin{align*}
\mathrm{MSE} &\eqdef  \frac{1}{K} \sum_{i=1}^K  \norm{\mu -\hat \mu(y_i)}^2\\
\sure &\eqdef -n \sigma^2 +  \frac{1}{K} \sum_{i=1}^K  \pa{\norm{y_i - \hat \mu(y_i)}^2 +  2\sigma^2 \dive(\hat \mu)(y_i) }
\\
\sure_{\mathrm{param}} &\eqdef -n \sigma^2 +\frac{1}{K} \sum_{i=1}^K\pa{\norm{y _i- \hat \mu(y_i)}^2 +  2\sigma^2 P_i},
\end{align*}
where $P_i$ in $\sure_{\mathrm{param}} $ is the number of recovered parameters for the $i^{th}$ run.

\begin{figure}
\begin{center}
\includegraphics[width=0.48\textwidth]{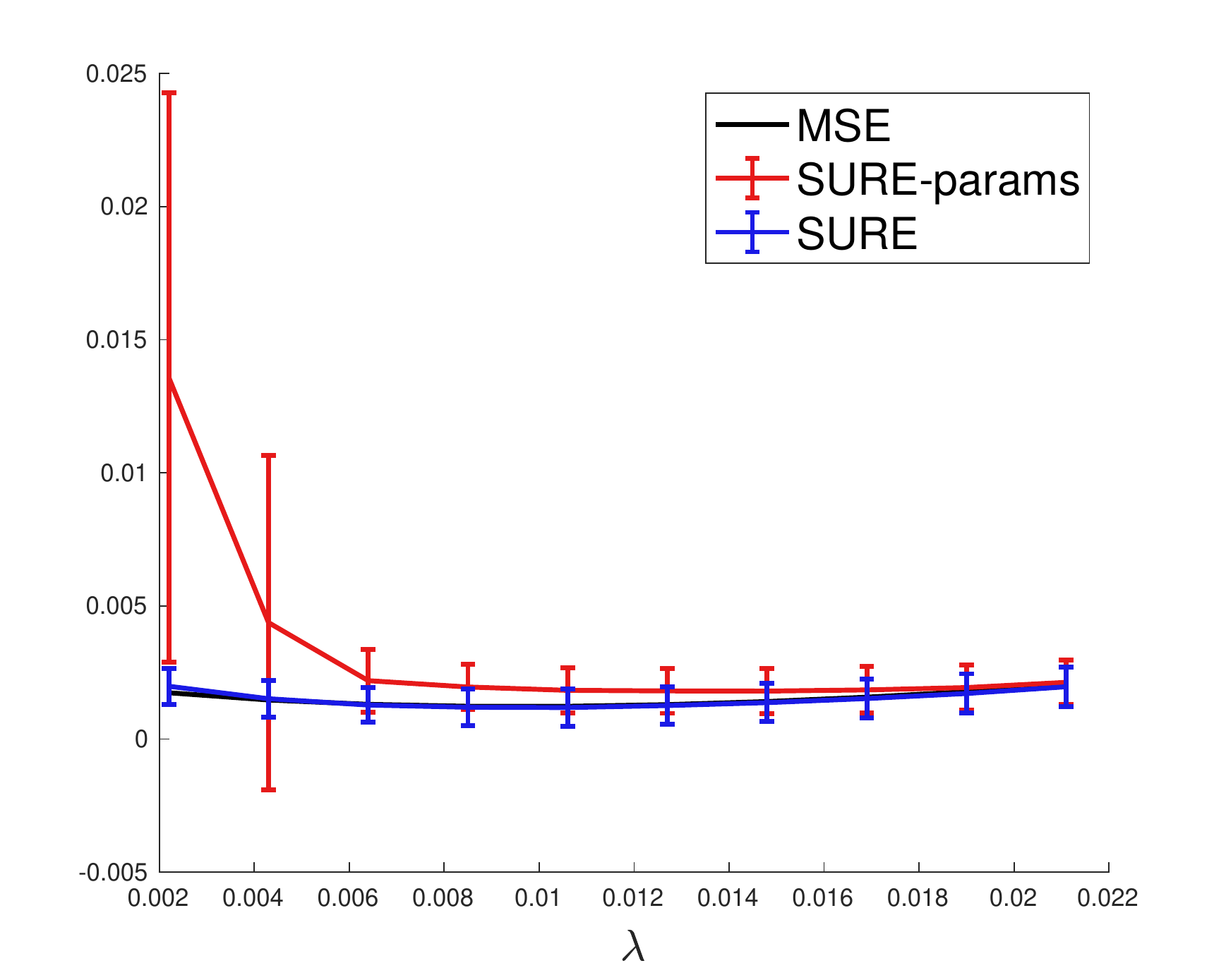}
\includegraphics[width=0.48\textwidth]{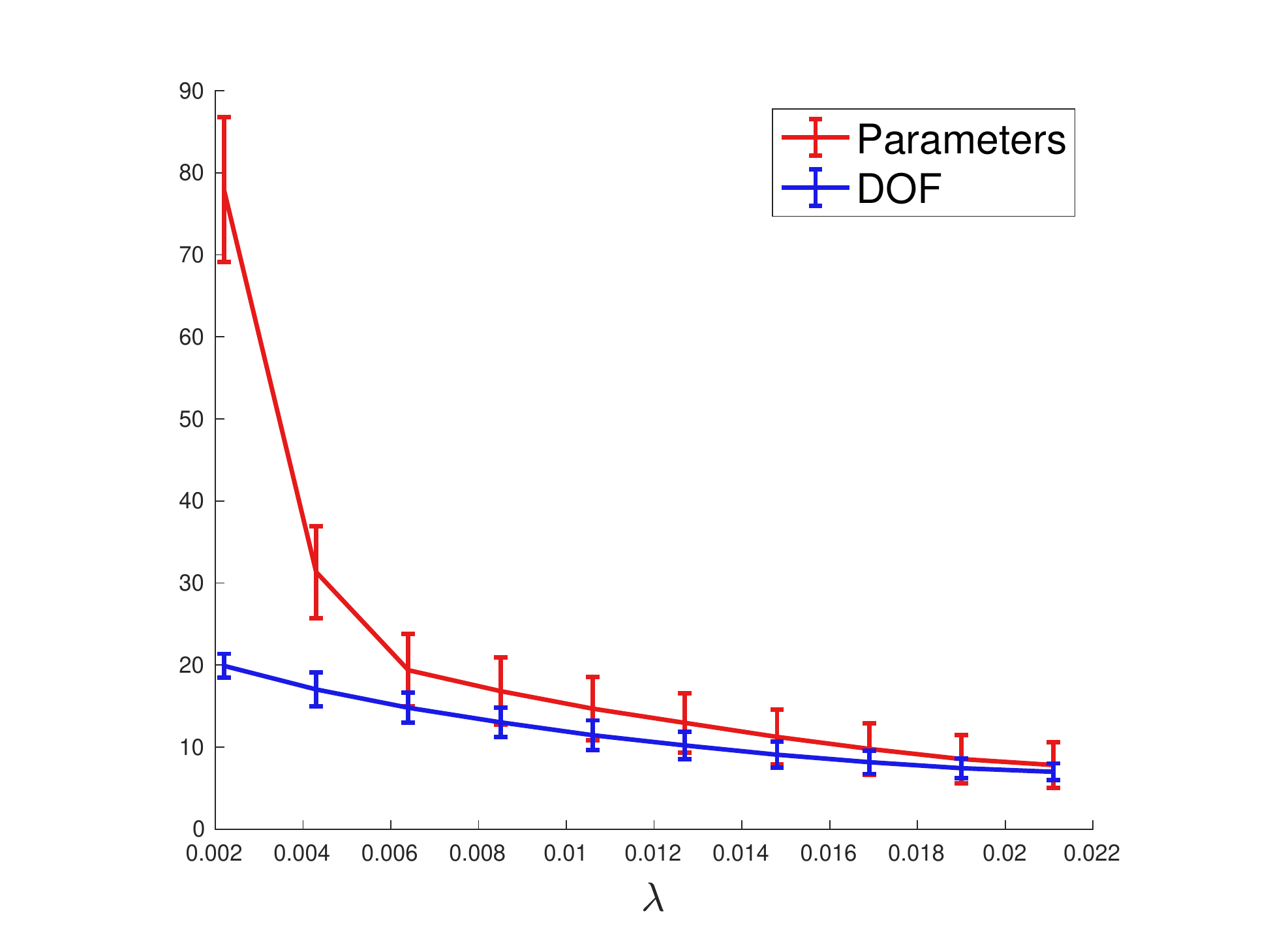}
\end{center}
\caption{Comparison of the SURE and DOF for Fourier sampling. Left:  the true SURE versus the estimate computed using the number of recovered parameters. Right: the DOF versus the number of recovered parameters. The error bars show the standard deviation.  \label{fig:sure-fourier}}
\end{figure}

\begin{figure}
\begin{center}
\includegraphics[width=0.48\textwidth]{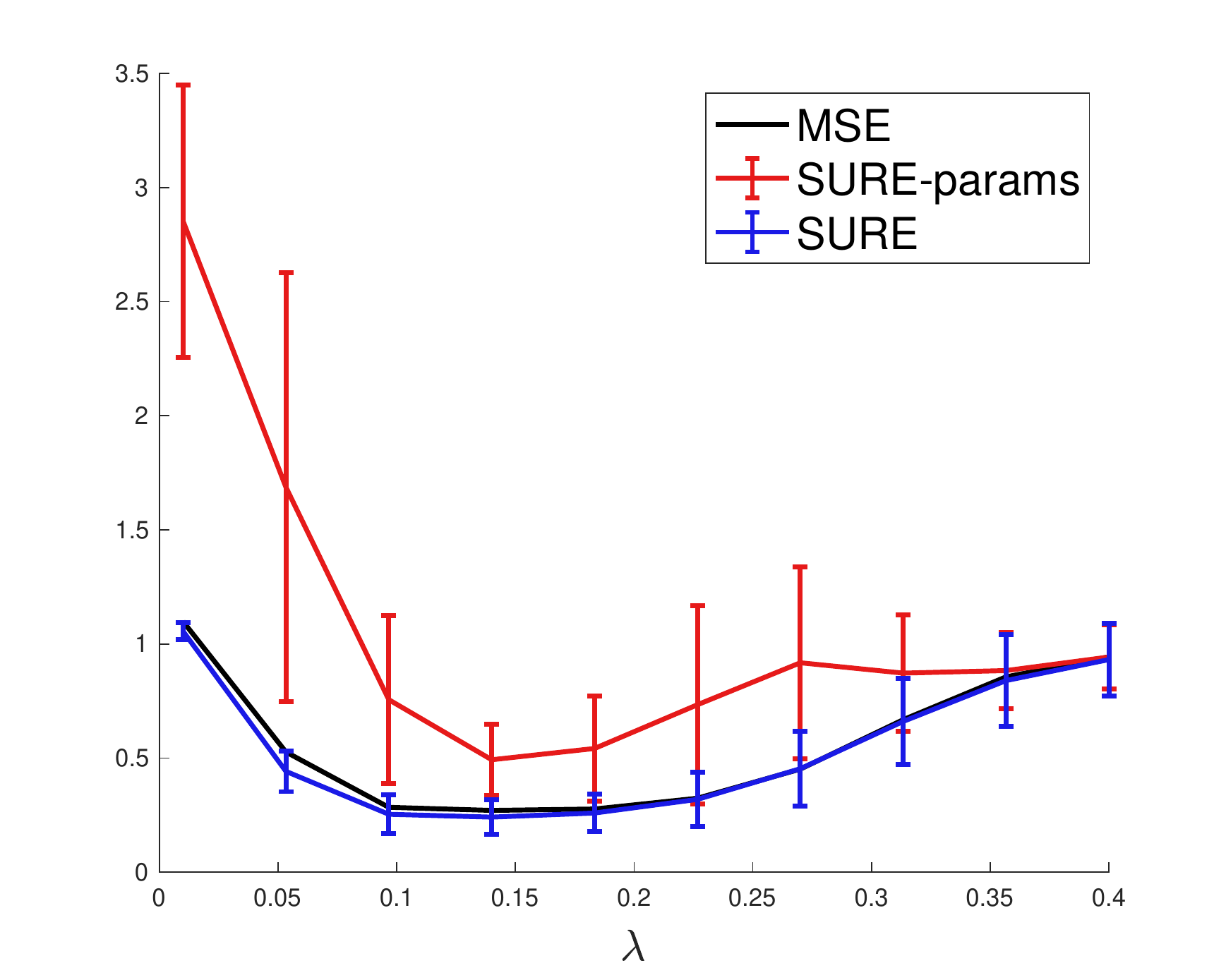}
\includegraphics[width=0.48\textwidth]{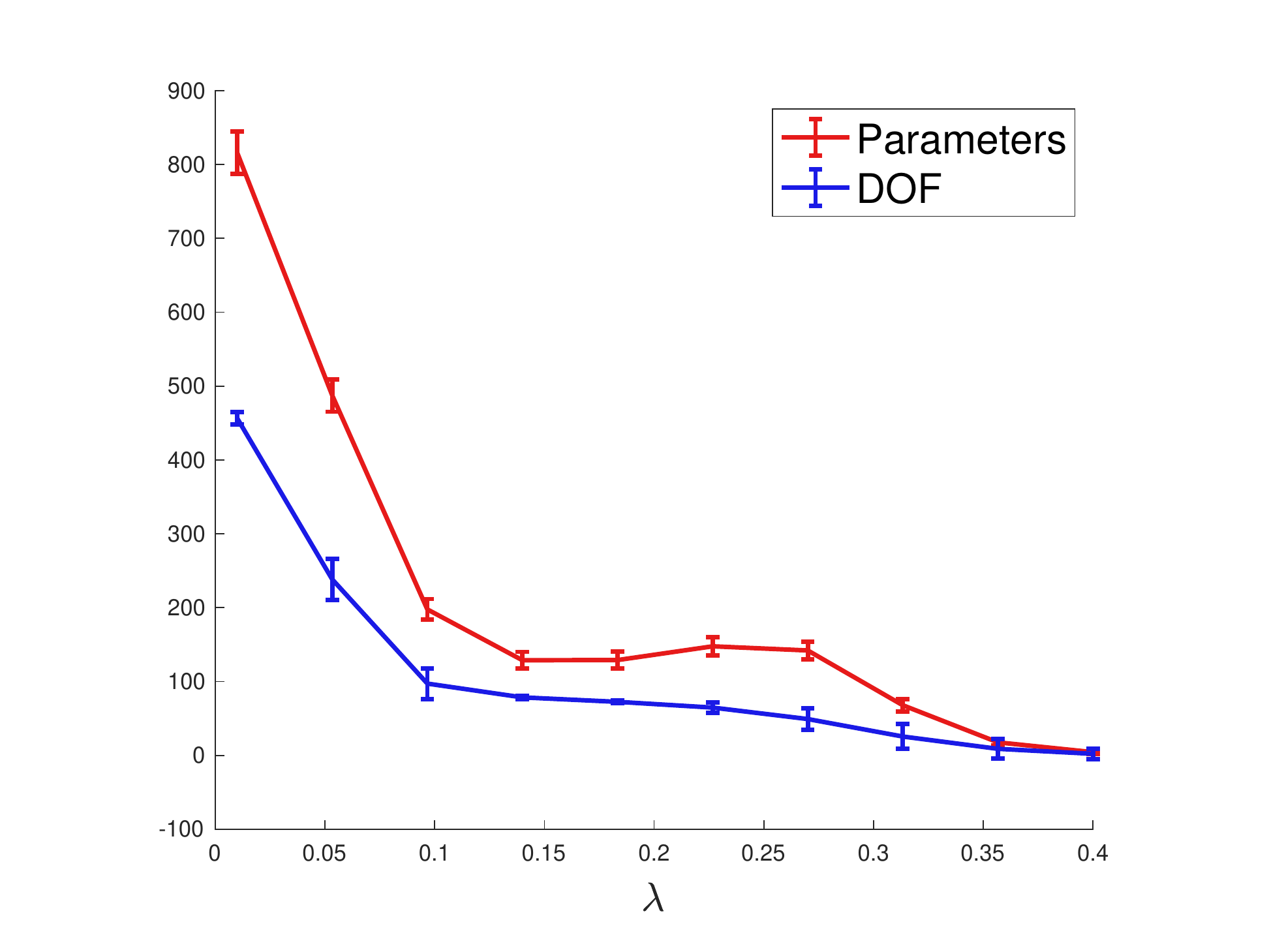}
\end{center}
\caption{Comparison of the SURE and DOF for a 2-layers neural network in dimension 50.  Left:  the true SURE versus the estimate computed using the number of recovered parameters. Right: the DOF versus the number of recovered parameters.  The error bars show the standard deviation. \label{fig:sure-nn}}
\end{figure}



\section{Conclusion}

In this paper, we have derived a formula for the degrees of freedom of sparse off-the-grid methods, and in particular for the Blasso and regression under positivity constraints. 
These results highlight the fact that $k$-sparse solutions of variational problems over $\RR^d$ have in general much fewer intrinsic parameters than the number $k(d+1)$ of free variables involved and that solving discretized problems typically tends to over-estimate the prediction risk. Controlling this gap is an interesting avenue for future works. This gap is primarily governed by the curvature induced by the underlying continuous model $\phi(x)$, but its exact value depends in a complicated way on relative positions between the estimated Dirac masses. 

\appendix

\section{Comment on the trace formula}\label{sec:comment-trace}

We aim at showing  $T\eqdef \tr\pa{ \Pi_{\Im(\Gamma_\Xx)} \begin{pmatrix}
0 & 0\\
0 & \diag(Z) D_a^{-1}
\end{pmatrix} M^{-1}    } \geq 0$.
Writing $\tilde D \eqdef  \begin{pmatrix}
0 & 0\\
0 & \diag(Z) D_\be^{-1}
\end{pmatrix}$, we have
\begin{align*}
(\Gamma_\Xx^* \Gamma_\Xx)^\dagger \tilde D M^{-1}   \Gamma_\Xx^* \Gamma_\Xx
&= (\Gamma_\Xx^* \Gamma_\Xx)^\dagger \tilde D M^{-1}  \pa{ M -\tilde D} \\
&=  (\Gamma_\Xx^* \Gamma_\Xx)^\dagger \tilde D^{\frac12}  \pa{ \Id  - \tilde D^{\frac12}   M^{-1}  \tilde D^{\frac12}}  \tilde D^{\frac12}
\end{align*}
and note that $M \succeq \tilde D$ which implies that $M^{-1} \preceq \tilde D^{\dagger}$ which implies that $ \tilde D^{\frac12}   M^{-1}  \tilde D^{\frac12} \preceq  \tilde D^{\frac12}    \tilde D^{\dagger} \tilde D^{\frac12}  \preceq \Id$. Therefore, $B \eqdef \tilde D^{\frac12}\pa{\Id  - \tilde D^{\frac12}   M^{-1}  \tilde D^{\frac12}}  \tilde D^{\frac12}\succeq 0$ and hence, $T = \tr\pa{\pa{(\Gamma_\Xx^* \Gamma_\Xx)^{\dagger}}^\frac12   B \pa{(\Gamma_\Xx^* \Gamma_\Xx)^{\dagger}}^\frac12  } \geq 0$ .

\section{O-minimal geometry and negligibility of the set $\Hh$} \label{sec:ominimal}

We first recall some facts about definable sets \cite{coste1999introduction,vaiter2017degrees}:
\begin{itemize}
\item The addition, multiplications and composition of of definable  functions are definable.
\item The Jacobian of a differentiable and definable function is definable.
\item Note that if $f:\RR^n\to \RR^m$ is a definable function,  then for all  definable subsets $I$ of $ \RR^m$, 
$$
\enscond{x}{f(x) \in I} = \Pi_{(n+m),n}\pa{\enscond{(x,z)}{ f(x) = z  } \cap (\Om \times I)}
$$
is definable, where $ \Pi_{(n+m),n}$ is the projection onto the first $n$ variables. In particular, $\enscond{x}{f(x) =y}$ is definable.

\item If $f: A\times B \to C$ is definable and $I$ is definable, then the following are definable:
\begin{align*}
&\enscond{y\in B}{\exists x, \; f(x,y) \in I} = \Pi_Y \enscond{(x,y)}{f(x,y) \in I}\\
&\enscond{y\in B}{\forall x, \;  f(x,y) \in I}  = B\setminus  \enscond{y\in B}{\exists x, \;  f(x,y) \in C\setminus I},
\end{align*}
since definable sets are stable in taking complements, and projections.

\item Note that $f_1 = a\mapsto \abs{a}$ and $f_2 = a\mapsto\sign(a)$ are semi-algebraic
$$
\Gg_{f_1} = \enscond{(a,b) }{ \pa{a+b = 0 \text{ or } a-b = 0} \text{ and } b>0}
$$
$$
\Gg_{f_2} = \enscond{(a,b)}{ a\cdot b = \abs{a}}.
$$ 
\item Given $M\in \RR^{n\times m}$, $M\mapsto M^*$ is definable (in fact it is algebraic), and $M\mapsto M^\dagger$ is also algebraic and hence definable, since by definition of the pseudoinverse, its graph
\begin{align*}
&\enscond{(M,A)}{A = M^\dagger}=\\
&
\quad \enscond{(M,A)}{ MAM = M, AMA = A, (MA)^* = MA, (AM)^* = AM}
\end{align*}
is an algebraic set.
\item Given $s\leq \min\{m,n\}$, $ \enscond{A\in \RR^{m\times n}}{\rank(A) = s}$ is a semi-algebraic set since $\rank(A) = s$ if and only if
\begin{align*}
A\in &\Aa \eqdef \pa{\cap_{I \subset [n], \abs{I} = s} \ens{\not \exists c\in \RR^s \text{ such that } A_I c = 0}}\\
&\cap  \pa{\cup_{\abs{I} = s+1} \ens{\exists c\in \RR^{s+1} \text{ such that } A_I c = 0}}.
\end{align*}
Note that $\Aa$ is made up of unions and intersection of finitely many sets, each of which is semi-algebraic since they are defined using first order formulas.
\end{itemize}

\begin{prop}\label{prop:definable}
Assume that $x\mapsto \phi(x)$ is a definable function. Then, $\Hh$ is of Lebesgue measure zero.
\end{prop}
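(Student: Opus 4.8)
The plan is to follow the same three-step template used in the proof of Proposition~\ref{prop:K_measurezero}, replacing the appeals to semi-algebraicity and the Tarski-Seidenberg principle by their o-minimal (definable) counterparts. Since $\Hh$ is a countable union over $k\in\NN$ and the finitely many choices of $\sigma,S,I$ for each fixed $k$, and since a countable union of Lebesgue-null sets is null, it suffices to show that for each fixed tuple $(k,S,I,\sigma)$ the boundary $\mathrm{Bd}(\Pi_Y(\Qq_{k,S,I,\sigma}))$ has Lebesgue measure zero in $\RR^n$.

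First I would prove that $\Qq_{k,S,I,\sigma}\subseteq \RR^n\times\RR^k\times\Om^k$ is definable, by decomposing its defining conditions and invoking the closure properties of definable functions and sets recalled at the start of this appendix. The membership $\Phi^*(y-\Phi_\Ee a)\in\lambda\,\partial\abs{\mes_{a,\Ee}}_{TV}$ I would rewrite, using the description of the subdifferential of $\abs{\cdot}_{TV}$ recalled earlier, as the conjunction of the interpolation/sign conditions $\eta(x_i)=\sign(a_i)$ at the support points together with the uniform bound $\norm{\eta}_\infty\leq 1$, where $\eta\eqdef\frac1\lambda\Phi^*(y-\Phi_\Ee a)$. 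Because $\phi$ is definable, the map $(x,y,a,\Ee)\mapsto\eta(x)$ is definable, so the sign/interpolation conditions are definable; and the uniform bound $\forall x\in\Om,\ \abs{\eta(x)}\leq 1$ is definable by stability of definable sets under universal quantification (equivalently, under projection and complementation). The rank condition $\rank(M)=\abs{S}$ with $M=\Phi_\Ee$ is definable by the explicit characterization of fixed-rank loci recalled above, combined with definability of $\Ee\mapsto\Phi_\Ee$, and the final algebraic constraint $\bigl((M^\dagger+M_S^\dagger M_{S^c}M^\dagger_{(S^c,\cdot)})(y-(M^*)^\dagger\lambda\sigma)\bigr)_I=0_I$ is definable because $M\mapsto M^*$, $M\mapsto M^\dagger$, and the column/row restrictions $M\mapsto M_S$, $M\mapsto M_{(S^c,\cdot)}$ are definable and definability is preserved under sums, products and composition.

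Second, once $\Qq_{k,S,I,\sigma}$ is known to be definable, the projection $\Pi_Y(\Qq_{k,S,I,\sigma})\subseteq\RR^n$ is definable by stability of definable sets under projection (the o-minimal analogue of Tarski-Seidenberg). Third, I would invoke the fundamental dimension inequality for the frontier of a definable set, namely $\dim(\mathrm{Bd}(D))<\dim(D)\leq n$ for any definable $D\subseteq\RR^n$ (the analogue of Theorem 3.22 in \cite{coste1999introduction} used in Proposition~\ref{prop:K_measurezero}); applied to $D=\Pi_Y(\Qq_{k,S,I,\sigma})$ this gives $\dim(\mathrm{Bd}(\Pi_Y(\Qq_{k,S,I,\sigma})))<n$, and since a definable set of dimension strictly less than $n$ is Lebesgue-null in $\RR^n$, the claim follows after taking the countable union.

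The main obstacle is the first step, and within it the verification that the total-variation subdifferential constraint is definable: the bound $\norm{\eta}_\infty\leq 1$ is a genuine quantification over the continuous variable $x\in\Om$, so its definability rests crucially on the definability of $x\mapsto\phi(x)$ (hence of $\eta$ as a function of all variables) together with the quantifier-elimination property of o-minimal structures. This is exactly where the hypothesis on $\phi$ enters, and where the argument departs from the purely algebraic manipulations handling the rank and pseudoinverse conditions.
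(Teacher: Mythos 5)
Your proposal is correct and follows essentially the same route as the paper's proof: use countability to reduce to a single tuple $(k,S,I,\sigma)$, establish definability of the constraint set via the closure properties listed in the appendix (with the TV-subdifferential membership rewritten, exactly as the paper does, as the sign/interpolation conditions plus the universally quantified bound $\norm{\eta}_\infty\leq 1$, and the rank/pseudoinverse conditions handled algebraically), then conclude by definability of the projection and the dimension bound $\dim(\mathrm{Bd}(\Pi_Y(\Qq_{k,S,I,\sigma})))<n$. The only cosmetic difference is that the paper splits the constraints into two definable sets $\Aa_1$ and $\Aa_2$ and writes $\Pi_Y(\Qq)$ as a projection of their intersection, whereas you verify definability of $\Qq_{k,S,I,\sigma}$ directly; the substance is identical.
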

\begin{proof}
It is enough to show that $\mathrm{Bd}(\Pi_Y (\Qq_{k,S,I,\sigma}))$ is of Lebesgue zero measure, in particular, we need to show that $\dim\pa{\mathrm{Bd}(\Pi_Y (\Qq_{k,S,I,\sigma}))} <n$.

Let $\Qq \eqdef \Qq_{k,S,\sigma,I}$. First note  that since $x\mapsto \phi(x)$ is definable, $\Ee \eqdef \{x_j\}_{j=1}^k \mapsto \Phi_\Ee = (\phi(x_j))_j\in \RR^{n\times k}$ is also definable.
Also,t $f (x, y,a,\Ee) \mapsto \frac{1}{\la} \dotp{\phi(x)}{y - \Phi_\Ee a}$ is a definable function. Define the sets
\begin{align*}
\Aa_1 \eqdef  \enscond{(y,a,\Ee)}{  
\Ee = \{x_j\}_{j=1}^k, \; \forall j\in[k], \;  f({x_j},y,a,\Ee) = \sign(a_j)  , \qandq  \forall  x \abs{f(x,y,a,\Ee)} \leq 1 }.\\
\Aa_2 \eqdef \enscond{(y,\Ee)}{ M = \Phi_\Ee, \;  \rank(\Phi_\Ee) = \abs{S} \; ((M^\dagger + M_S^\dagger M_{S^c} (M)^\dagger_{(S^c,\cdot)}   )  (y-(M^*)^\dagger \lambda \sigma  )_I = 0_I}
\end{align*}
These are both definable sets since the constraints are definable by the properties listed above, and hence,
$$
\Pi_Y(\Qq) = \Pi_Y( \Pi_{Y,\Ee}\Aa_1 \cap \Aa_2),\qwhereq \Pi_{Y,\Ee}:(y,a,\Ee) \mapsto (y,\Ee)
$$
is also definable.
Finally, since $\Pi_Y(\Qq)  \subseteq \RR^n$ is definable, we have $\mathrm{dim}(\Pi_Y(\Qq)) \leq n$
$$
\mathrm{Bd}\pa{\Pi_Y(\Qq) } < \mathrm{dim}(\Pi_Y(\Qq)) \leq n. 
$$
\end{proof}

This results holds for instance for Fourier measurements and neural network with a Relu activation (which leads to semi-algebraic sets) and for deconvolution using a Gaussian point spread function (since the exponential function is definable in an o-minimal structure~\cite{wilkie1996model}).

\section{Full rank of Fourier matrix}\label{app:fullrank}

If the extended support is not $\TT$, then it consists of at most $k\leq f_c$ points, so $2k<n$. In this case, by observation \eqref{equiv_complex}, $\Gamma_{\Xx}$ is injective provided that
$\tilde \Gamma_\Xx$, defined  below, is injective:
$$
\tilde \Gamma_\Xx \eqdef
\begin{pmatrix}
u_1^{-n} &u_2^{-n}& \cdots & u_k^{-n}&(-2\pi n) u_1^{-n} &(-2\pi n) u_2^{-n}& \cdots &(-2\pi n)  u_k^{-n}\\
\vdots\\
u_1^{-1} &u_2^{-1}& \cdots & u_k^{-1}&(-2\pi) u_1^{-1} &(-2\pi ) u_2^{-1}& \cdots &(-2\pi)  u_k^{-1}\\
1 &1& \cdots & 1&0&0&\cdots & 0\\
u_1 &u_2& \cdots & u_k &(2\pi) u_1 &(2\pi ) u_2& \cdots &(2\pi)  u_k^{-1}\\
\vdots\\
u_1^{n} &u_2^{n}& \cdots & u_k^{n}&(2\pi n) u_1^{n} &(2\pi n) u_2^{n}& \cdots &(2\pi n)  u_k^{n}
\end{pmatrix}
$$
where $u_j = e^{2\imath\pi x_j}$.  We now carry out row-echelon and column-echelon operations to show that $\tilde \Gamma_\Xx$ is indeed full rank.

%

%

After multiplying the last $k$ columns by $1/(2\pi)$, then for all $j\in[k]$, multiply  column $j$ and $2j$ (corresponding to $u_j$) by $u_j^n$, we obtain:
$$
\begin{pmatrix}
1 &1& \cdots & 1&(-  n)  &(-  n) & \cdots &(-  n)  \\
\vdots\\
u_1^{n-1} &u_2^{n-1}& \cdots & u_k^{n-1}&(- 1) u_1^{n-1} &(-1 ) u_2^{n-1}& \cdots &(-1)  u_k^{n-1}\\
u_1^n &u_2^n & \cdots & u_k^n&0&0&\cdots & 0\\
u_1^{n+1} &u_2^{n+1}& \cdots & u_k^{n+1} & u_1^{n+1} & u_2^{n+1}& \cdots &  u_k^{n+1}\\
\vdots\\
u_1^{2n} &u_2^{2n}& \cdots & u_k^{2n}& n u_1^{2n} &n u_2^{2n}& \cdots & n  u_k^{2n}
\end{pmatrix}
$$

Given a sequence $\{u_j\}_{j=1}^t$ for distinct numbers so that $n>t\geq k$ and $n\geq t+k$, we aim to show that the following matrix is full rank by performing row and column operations.
$$
V^u_{t,k,2n}\eqdef \begin{pmatrix}
1 &1& \cdots & 1&(-  n)  &(-  n) & \cdots &(-  n)  \\
\vdots\\
u_1^{n-1} &u_2^{n-1}& \cdots & u_t^{n-1}&(- 1) u_1^{n-1} &(-1 ) u_2^{n-1}& \cdots &(-1)  u_k^{n-1}\\
u_1^n &u_2^n & \cdots & u_t^n&0&0&\cdots & 0\\
u_1^{n+1} &u_2^{n+1}& \cdots & u_t^{n+1} & u_1^{n+1} & u_2^{n+1}& \cdots &  u_k^{n+1}\\
\vdots\\
u_1^{2n} &u_2^{2n}& \cdots & u_t^{2n}& n u_1^{2n} &n u_2^{2n}& \cdots & n  u_k^{2n}
\end{pmatrix}
$$

For $j=1,\ldots, k$, add $n$ times column $j$ to column $t+j$
$$
\begin{pmatrix}
1 &1& \cdots & 1&0  &0& \cdots &0\\
u_1 & u_2 &\cdots & u_t & u_1 & u_2 &\cdots &u_k \\
\vdots\\
u_1^{n-1} &u_2^{n-1}& \cdots & u_t^{n-1} &(n- 1) u_1^{n-1} &(n-1 ) u_2^{n-1}& \cdots &(n-1)  u_k^{n-1}
\\
u_1^n &u_2^n & \cdots & u_t^n &n u_1^n &nu_2^n &\cdots & n u_k^n
\\
u_1^{n+1} &u_2^{n+1} & \cdots & u_t^{n+1}  &(n+1) u_1^{n+1} & (n+1) u_2^{n+1}& \cdots & (n+1) u_k^{n+1}
\\
\vdots\\
u_1^{2n} &u_2^{2n} & \cdots & u_t^{2n} & 2n u_1^{2n} & 2n u_2^{2n}& \cdots & 2n  u_k^{2n}
\end{pmatrix}
$$
Subtract the first column from each column $2,\ldots, t$:
$$
\begin{pmatrix}
1 &0& \cdots & 0&0  &0& \cdots &0\\
u_1 & u_2-u_1 &\cdots & u_t-u_1 & u_1 & u_2 &\cdots &u_k \\
\vdots\\
u_1^{n-1} &u_2^{n-1}-u_1^{n-1}& \cdots & u_t^{n-1}-u_1^{n-1}&(n- 1) u_1^{n-1} &(n-1 ) u_2^{n-1}& \cdots &(n-1)  u_k^{n-1}\\
u_1^n &u_2^n-u_1^n & \cdots & u_t^n-u_1^n&nu_1^n &nu_2^n &\cdots & n u_k^n\\
u_1^{n+1} &u_2^{n+1}-u_1^{n+1}& \cdots & u_t^{n+1}-u_1^{n+1} &(n+1) u_1^{n+1} & (n+1) u_2^{n+1}& \cdots & (n+1) u_k^{n+1}\\
\vdots\\
u_1^{2n} &u_2^{2n}-u_1^{2n}& \cdots & u_t^{2n}-u_1^{2n}& 2n u_1^{2n} & 2n u_2^{2n}& \cdots & 2n  u_k^{2n}
\end{pmatrix}
$$
For  $j=2n,2n-1,\ldots, 2$, subtract  $u_1$ times row $j-1$  from row $j$:
$$
\begin{pmatrix}
1 &0& \cdots & 0&0  &0& \cdots &0\\
0 & v_2 &\cdots & v_t & u_1 & u_2 &\cdots &u_k \\
\vdots\\
0 &u_2^{n-2}v_2 & \cdots & u_t^{n-2}v_t& u_1^{n-1} &(n-2) u_2^{n-2}v_2+u_2^{n-1} &\cdots & (n-2) u_k^{n-2} v_k+u_k^{n-1}
\\
0 &u_2^{n-1}v_2 & \cdots & u_t^{n-1}v_t& u_1^n &(n-1) u_2^{n-1}v_2+u_2^{n} &\cdots & (n-1) u_k^{n-1} v_k+u_k^{n}
\\
0 &u_2^{n}v_2& \cdots & u_t^{n}v_t & u_1^{n+1} & n u_2^{n}v_2+u_2^{n+1}& \cdots & n u_k^{n}v_k+u_k^{n+1}\\
\vdots
\\
0 &u_2^{2n-1}v_2& \cdots & u_t^{2n-1}v_t&  u_1^{2n} & (2n-1) u_2^{2n-1}v_2 + u_2^{2n}& \cdots & (2n-1)  u_k^{2n-1}v_k + u_k^{2n}
\end{pmatrix}
$$
where $v_j = u_j - u_1$.
Divide column $t+1$ by $u_1$ and for $j=2,\ldots, t$, divide  column $j$ by $(u_j-u_1)$:
$$
\begin{pmatrix}
1 &0& \cdots & 0&0  &0& \cdots &0\\
0 & 1 &\cdots & 1 & 1 & u_2 &\cdots &u_k \\
\vdots\\
0 &u_2^{n-2} & \cdots & u_t^{n-2}& u_1^{n-2} &(n-2) u_2^{n-2}v_2+u_2^{n-1} &\cdots & (n-2) u_k^{n-2} v_k +u_k^{n-1}
\\
0 &u_2^{n-1} & \cdots & u_t^{n-1}& u_1^{n-1} &(n-1) u_2^{n-1}v_2+u_2^{n} &\cdots & (n-1) u_k^{n-1} v_k+u_k^{n}
\\
0 &u_2^{n}& \cdots & u_t^{n} & u_1^{n} & n u_2^{n}v_2+u_2^{n+1}& \cdots & n u_k^{n}v_k+u_k^{n+1}\\
\vdots
\\
0 &u_2^{2n-1}& \cdots & u_t^{2n-1}&  u_1^{2n-1} & (2n-1) u_2^{2n-1}v_2 + u_2^{2n}& \cdots & (2n-1)  u_k^{2n-1}v_k + u_k^{2n}
\end{pmatrix}
$$
For $j=2,\ldots, k$, subtract $u_j$ times column $j$ from column $t+j$:
$$
\begin{pmatrix}
1 &0& \cdots & 0&0  &0& \cdots &0\\
0 & 1 &\cdots & 1 & 1 & 0 &\cdots &0 \\
\vdots\\
0 &u_2^{n-2} & \cdots & u_t^{n-2}& u_1^{n-2} &(n-2) u_2^{n-2}v_2&\cdots & (n-2) u_k^{n-2} v_k 
\\
0 &u_2^{n-1} & \cdots & u_t^{n-1}& u_1^{n-1} &(n-1) u_2^{n-1}v_2 &\cdots & (n-1) u_k^{n-1} v_k
\\
0 &u_2^{n}& \cdots & u_t^{n} & u_1^{n} & n u_2^{n}v_2& \cdots & n u_k^{n}v_k\\
\vdots
\\
0 &u_2^{2n-1}& \cdots & u_t^{2n-1}&  u_1^{2n-1} & (2n-1) u_2^{2n-1}v_2 & \cdots & (2n-1)  u_k^{2n-1}v_k 
\end{pmatrix}
$$
For $j=2,\ldots, k$, divide column $t+j$ by $u_j - u_1$:
\begin{align*}
&\begin{pmatrix}
1 &0& \cdots & 0&0  &0& \cdots &0\\
0 & 1 &\cdots & 1 & 1 & 0 &\cdots &0 \\
0 &u_2  & \cdots & u_t & u_1 &u_2 &\cdots & u_k
\\
\vdots
\\
0 &u_2^{n-1}  & \cdots & u_t^{n-1} & u_1^{n-1} &(n-1) u_2^{n-1} &\cdots & (n-1) u_k^{n-1}
\\
0 &u_2^{n} & \cdots & u_t^{n}  & u_1^{n} & n u_2^{n} & \cdots & n u_k^{n} \\
\vdots
\\
0 &u_2^{2n-1} & \cdots & u_t^{2n-1}&  u_1^{2n-1} & (2n-1) u_2^{2n-1} & \cdots & (2n-1)  u_k^{2n-1}
\end{pmatrix}\\
&= \begin{pmatrix}
1 & 0\\
0 & V^{\tilde u}_{t,k-1,2n-1}
\end{pmatrix}
\end{align*}
where $\tilde u = (u_2,\ldots, u_t, u_1)$. By iterating this argument, we have that injectivity of $V^u_{t,k,2n}$ follows from injectivity of
$V^{u'}_{t,0,2n-k}
$ 
where $u' = (u_{k+t},\ldots,u_t, u_1, \ldots, u_k)$, which is injective since it is a Vandermonde matrix and $2n>k+t$.

%
%
%
%

\section{Proof of Theorem \ref{thm:main2}}\label{app-proof-thm-main}
We assume throughout that $y\not\in \Kk$, which is a set of zero measure by Theorem \ref{prop:K_measurezero}.

Suppose that $\Ee_y = \emptyset$. Then,
 $\norm{\eta_y}_\infty < 1$ and $\mes \equiv 0$ is a solution. By continuity of $\eta_y$, there exists $\epsilon>0$ such that for all $y\in B_\epsilon(y)$, $\norm{\eta_{y'}}_\infty <1$ and zero is a solution to $\Pp_{\lambda}(y')$. So, $\partial_y(\Phi \mes_{y}) = 0$.

Suppose that  $\Ee_y = \TT$, then $\eta_y \equiv 1$ or $\eta_y\equiv -1$. Assume that $\eta_y \equiv 1$ (the argument for $\eta_y \equiv -1$ is similar). Then, there exist $\Ee \in \TT^n$ and $\be \in \RR_{\geq 0}^n$ such that $y - \Phi_\Ee \be =\lambda \delta_1$. Since $y\not\in \Gg^{+}$, there exists a neighbourhood around $y$ such that for all $y'\in B_\epsilon(y)$, $y' - \Phi_{\Ee'} \be'  = \lambda \delta_1$ for some $\be',\Ee'\in \RR_{\geq 0}^n\times \TT^n$.  So, $\Phi^* \pa{y' - \Phi_{\Ee'} \be' }/\lambda \equiv 1$ and $\mes_{\be',\Ee'}$ is a solution to $\Pp_\la(y')$. Therefore, $\hat \mu(y') = y' -\lambda \delta_1$ and  $\mathrm{Tr}\pa{\partial_y \hat \mu(y) } = n$.

It remains to consider the case where $\Ee_y$ is a discrete point set. 
Given $y\in \RR^n$, there exists $\mes_{\be,\Aa}$ such that $\Phi_\Aa$ is injective, and let  $k=\abs{\Aa}$. Let $s_y = {\eta_y}{\restriction_\Aa}$.  Define the function
$$
F(\be,\Aa,y)\in \RR^k\times \RR^k \times\RR^n = \Gamma_\Aa^* \pa{\Phi_\Aa \be - y} + \lambda\binom{s_y}{0_k}\in \RR^{2k}
$$
where $\Gamma_\Aa = [\Phi_\Aa,\Phi'_{\Aa}]$. 
We have $\partial_y F = -\Gamma_\Aa^*$, and writing  $u \eqdef (\be,\Aa)$,
\begin{align*}
\partial_u F(\be,\Aa,y) &= \Gamma_\Aa^* \Gamma_\Aa \begin{pmatrix}
\Id & 0\\
0 & \diag(\beta)
\end{pmatrix} + \begin{pmatrix}
\Id & 0\\
0 & \diag(z)
\end{pmatrix} \\
&= \pa{\Gamma_\Aa^* \Gamma_\Aa  + \begin{pmatrix}
\Id & 0\\
0 & \diag((Z_i/\be_i)_i)
\end{pmatrix}  } \begin{pmatrix}
\Id & 0\\
0 & \diag(\be)
\end{pmatrix}
\end{align*}
where $Z = \pa{\dotp{\Phi_\Aa \be - y}{\phi''(\cdot)}}_{\restriction_\Aa}\in\RR^k$. Since $\Gamma_\Aa^* \Gamma_\Aa $ is invertible (by Appendix \ref{app:fullrank}),  
we can apply the implicit function theorem to define a function $g$ in a small neighbourhood $U$ around $y$, such that $y'\in U\mapsto (\be', \Aa')$ is a $\Cc^1$ function. If we can show that $\mes_{\be',\Aa'}$ is indeed a solution of $\Pp_\la(y')$, then this allows us to apply Theorem \ref{prop:div_formula} to  compute the DOF.

Let $m\eqdef \abs{\Ee_y}$ and write $M^y\eqdef \Phi_{\Ee_y}$.  Let $J$ be such that $(\Ee_{y})_J = \Aa$, and let $\beta \in \RR^m$ be such that $\mathrm{Supp}(\beta) = J$ and $\mes_{\beta,\Ee_y}$ solves \eqref{eq:blasso}.  Note that $M_J^y= \Phi_{\Aa}$ and recall that $M^y$ is full rank due to Lemma \ref{lem:fourier_inj}.
 
\paragraph{Properties of $\beta$:} 
 By Lemma \ref{lem_sol}, since $\ker(M^y) = \ens{0}$, the solution to \eqref{eq:blasso} is unique and equal to  $\mes_{\beta,\Ee_y}$ where
$$
\beta = (M^y)^\dagger (y - ((M^y)^*)^\dagger \lambda \sigma), \qwhereq \sigma= (\eta_y)_{\restriction_{\Ee_y}}.
$$
Letting  $I \eqdef [m]\setminus J$, we have that 
$$
\beta_I = \pa{(M^y)^\dagger (y - ((M^y)^*)^\dagger \lambda \sigma)}_I = 0_I.
$$

Write $\Ee_y = \{x_i\}_{i=1}^m$, and for each $i$, let $\bell_i$ be the first integer such that $\eta^{2\bell_i}(x_i) \neq 0$. By definition, $y\in \Qq_{m,\sigma,I,\bell}$.

\paragraph{Constructing a solution $\beta'$ for $\Pp_\lambda(y')$:}

Since $y\not\in \Hh$, we have $y$ is in the interior of $\Pi_Y(\Qq_{m,\sigma,I,\bell})$ and so, there exists
 $\epsilon>0$ such that for all $y'\in \Bb_\epsilon(y)$: there exists $\abs{\Ee'} = m$ and $M^{y'}= \Phi_{\Ee'}$ such that 
\begin{equation}\label{eq:suppI}
\pa{ (M^{y'})^\dagger (y' - ((M^{y'})^*)^\dagger \lambda \sigma)
}_I = 0_I,
\end{equation}
and, we can write $\Ee' = \{x_i'\}_{i=1}^m$ so that for each $i$, $(\eta^{y'})^{2\ell}(x_i) = 0$ for all $\ell<\bell_i$. By definition, $\Ee'$ is contained in the extended support of $y'$. By Proposition \ref{prop:cont_ext},  $\Ee'$ is precisely the extended suport with $\Ee_{y'} = \Ee'$ such that $y'\in \Bb_\epsilon(y) \mapsto \Ee_{y'}$ is a continuous function.
So, $M^{y'} \to M^y$ and $M^{y'}_S$ is injective, and since rank is preserved,  $((M^{y'})^*)^\dagger \to ((M^{y})^*)^\dagger$.

Define
$$
\beta' \eqdef (M^{y'})^\dagger f(y'), \qwhereq f(y') = y' - ((M^{y'})^*)^\dagger \lambda\sigma.
$$
By \eqref{eq:suppI}, $\beta_I' = 0$.
Note that $f$ is continuous as $y'$ changes, so since $\beta_J$ has all non-zero entries, $\sign(\beta'_J ) = \sign(\beta_J)$ when $y'$ is sufficiently close to $y$.

\section{Proof of Lemma \ref{lem_pos1}}\label{app:pos_lem}

Lemmas \ref{lem_pos1}  follows from the Fenchel-Rockafellar duality theorem,  which states that given proper, convex, lsc functionals $E$ and $F$, denoting the convex conjugates by $E^*$ and $F^*$, the dual of
\begin{equation}\label{eq:gen}
\inf_{\mes\in\Mm(\Om)} E(\Phi \mes) + F(\mes). \tag{$\Pp$}
\end{equation}
is
\begin{equation}\label{eq:gendual}
\sup_p -E^*(p)- F^*(-\Phi^*p).  \tag{$\Dd$}
\end{equation}
Moreover, if there exists $ \mes \in \mathrm{dom}(F)$ and $E$ is continuous at $\Phi  \mes$, then we have strong duality \eqref{eq:gen} = \eqref{eq:gendual}, there exists a dual solution, given primal and dual solutions $\mes_*$ and $p_*$, we have
$$
\Phi \mes_* \in \partial E^*(p_*) \qandq -\Phi^*p_* \in \partial F(\mes_*).
$$

For Lemma \ref{lem_pos1},
we can write \eqref{eq:pos_constr1} as \eqref{eq:gen}
with $E(z) \eqdef \frac{1}{2} \norm{z -y}_2^2$ and $F \eqdef   \iota_{\enscond{\mes}{\mes\geq 0}}$ which are proper, convex, lower semicontinuous functionals. Their convex conjugates are $E^*:\RR^n \to \RR$ and $F^*:\Cc(\Om) \to \RR$
$$
E^*(p) = \frac{1}{2}\norm{p}^2 + \dotp{p}{y} \qandq  F^* = \iota_{\enscond{f}{ f\leq 0}}
$$
Note that for $\mes \equiv 0$, $F(\mes) = 0 <\infty$ and clearly, $E$ is continuous at $\Phi \mes$. So, by Fenchel-Rockafellar duality, we have strong duality between \eqref{eq:pos_constr1} and \eqref{eq:dual_pos_constr1}. Moreover, any primal and dual solutions satisfy
$$
-\Phi^* p_* \in \partial F(\mes_*) = \enscond{f\in\Cc(\Om)}{f\leq 0, \;  f(x) =  0, \; \forall x\in \mathrm{Supp}(\mes_*)}\qandq \Phi \mes_* \in \partial E^*(p_*) = p_*+y
$$ 
and hence, the stated the primal dual relations hold.


\bibliographystyle{siam}
\bibliography{biblio}

\end{document}